\theoremstyle{plain}
\newtheorem{prop}{Proposition}
\newtheorem{thm}{Theorem}
\theoremstyle{definition}
\newtheorem{defn}{Definition}
\theoremstyle{remark}
\newtheorem{rem}{Remark}
\newcommand{\Esp}{\mathbb{E}}
\newcommand{\1}[0]{\mathbb{1}}
\newcommand{\Vrond}{\mathcal V}
\newcommand{\vertiii}[1]{{\left\vert\kern-0.25ex\left\vert\kern-0.25ex\left\vert #1 
    \right\vert\kern-0.25ex\right\vert\kern-0.25ex\right\vert}}
\newcommand{\set}[1]{\left\{ #1\right\}}
\renewcommand{\geq}{\geqslant}
\renewcommand{\leq}{\leqslant}
\newcommand{\Ind}{\mathbb{1}}
\newcommand{\follows}{\sim}
\def\RD{R_{C,\ell}}
\def\RDH{R_{C,\ell_{01}}}
\title{\texttt{COSMIC}: Mutual Information for Task-Agnostic Summarization Evaluation}
\author{ {\bf Maxime \textsc{Darrin}$^{1, 2, 3, 4}$} \quad {\bf Philippe \textsc{Formont}$^{1, 2, 4, 5}$} \\ {\bf Jackie Chi Kit \textsc{Cheung}$^{2, 3, 7}$} \quad {\bf Pablo \textsc{Piantanida}$^{1, 2, 4, 6}$} \\  $^{1}$International Laboratory on Learning Systems, $^{2}$Mila - Quebec AI Institute \\ $^{3}$McGill University $^{4}$Université Paris-Saclay, $^{5}$École de technologie supérieure (ETS) \\ $^{6}$CNRS, CentraleSupélec {$^7$Canada CIFAR AI Chair} \\ { \url{maxime.darrin@mila.quebec} \quad \url{philippe.formont@mila.quebec} } \\ { \url{jackie.cheung@mcgill.ca}\quad \url{pablo.piantanida@mila.quebec} } }
\algrenewcommand\algorithmicrequire{\textbf{Input:}}
\algrenewcommand\algorithmicensure{\textbf{Output:}}
\begin{document}
    \maketitle

    \begin{abstract}
    Assessing the quality of summarizers poses significant challenges---gold summaries are hard to obtain and their suitability depends on the use context of the summarization system. Who is the user of the system, and what do they intend to do with the summary? In response, we propose a novel task-oriented evaluation approach that assesses summarizers based on their capacity to produce summaries while preserving task outcomes. We theoretically establish both a lower and upper bound on the expected error rate of these tasks, which depends on the mutual information between source texts and generated summaries. We introduce \texttt{COSMIC}, a practical implementation of this metric, and demonstrate its strong correlation with human judgment-based metrics, as well as its effectiveness in predicting downstream task performance. Comparative analyses against established metrics like \texttt{BERTScore} and \texttt{ROUGE} highlight the competitive performance of \texttt{COSMIC}.
    
    \end{abstract}

    \section{Introduction}

    Assessing the quality of summarizers in different settings, tasks, and datasets is critical for better understanding these models and for studying their strengths and weaknesses. In many text generation scenarios, assessing model quality is arduous and resource-intensive, often necessitating human annotations and evaluations. Consequently, developing automatic metrics that align closely with human judgments is paramount~\cite{graham-baldwin-2014-testing, tratz2008summarization, 10.1145/1410358.1410359, deutsch2021towards}.

Standard automatic evaluation methods for summarization rely on the idea that a good summary should have some (semantic) overlap with either a gold standard or the source text~\cite{el2021automatic, allahyari2017text}. They leverage similarity metrics such as BLEU~\cite{papineni2002bleu} or ROUGE~\cite{lin-2004-rouge} to evaluate the quality of the generated summaries. However, these methods often do not correlate well with human judgements~\cite{kryscinski-etal-2020-evaluating, kocmi-etal-2021-ship}. To enhance alignment with human judgment and capitalize on recent advancements in large language models, recent efforts have concentrated on learned metrics for scoring summaries~\cite{zhang2020bertscore, rei-etal-2020-comet, liu2023geval}. For instance, \citet{clark2023seahorse} introduced six new learned metrics by finetuning a pretrained MT5 model~\cite{xue2021mt5} to predict human judgment along different axes. Another line of research focuses on reconstruction-based metrics, such as BLANC~\cite{vasilyev-etal-2020-fill} and Information Difference~\cite{egan2021play}, the former evaluating the actual reconstruction error of the source text with and without a summary, and the latter evaluating the information gain obtained when conditioning the generative model on the summary.

Standard evaluation methods, therefore, suffer from two major shortcomings. They rarely formally define a clear notion of quality and the methods used to evaluate said notion lack theoretical foundations. This leads to potential discrepancies between the intended evaluation and what is actually measured, leading to a lack of validity of the evaluation methods.

 We introduce a task-oriented evaluation setup where summaries are meant to allow an agent to perform downstream tasks without reading the longer source text. For example, if a political advisor drafts a briefing on a subject to enable a politician to make informed decisions, our evaluation considers the advisor successful if the decisions made using the summary align with those made using the initial source text~\cite{pu2023summary, VANDERWENDE20071606}. Providing such a well-defined notion of success enables careful analysis of the validity of the evaluation for different possible use cases. In addition, it enables us to perform mechanical evaluations of our method, reducing the noise from the evaluation process.

 Furthermore, many summarization techniques typically operate under the assumption that the downstream task is unknown during the summarization process, yet an implicit notion of summary quality exists. However, this assumption is seldom explicitly addressed or substantiated. In our task-oriented approach, we provide an information-theoretic rationale for the existence of such a metric. We demonstrate that it essentially involves assessing the mutual information (MI) between the source texts' distribution and the summaries generated by a given summarizer.

This paper does not aim to study the syntactic, semantic, and pragmatic aspects of features of summary information, which are essential for capturing the rich notion of information in human communication. 
For instance, we do not account for the summaries' fluency, grammatical correctness, or coherence. Instead, our approach emphasizes how effectively, in the best-case scenario, the output of a summarizer can be used to perform downstream tasks. The results in this paper suggest that the formal definition of mutual information successfully achieves this goal.  Our approach leverages embeddings to abstract the surface form of text, following recent studies by \citet{pillutla2021mauve} and \citet{pimentel2023usefulness}.

We evaluate the quality of our approach in two ways. First, we show that summarizers that induce a summary distribution with higher MI with the source texts' distribution are higher quality in the following sense --- \emph{they tend to produce summaries that preserve outcomes on downstream tasks} as compared to using the source texts. Second, we compare the MI to metrics trained on human judgments and show that it displays consistent correlations. Our results are consistent with and extend previous work that leverages MI to understand relationships in datasets~\cite{ethayarajh2022understanding, bugliarello2020its}, predictive performance of representations~\cite{sui2023evaluating} and tool to construct or evaluate  representations~\cite{kim2022mutual}, models and generative processes.\vspace{1mm}

    \noindent\textbf{Contributions.} Our contributions are threefold:
    \begin{enumerate}
        \item \textbf{A theoretical setting for summarizer evaluation.} We frame the summarizer evaluation problem as a statistical inference problem and we derive a task-agnostic reference-free quality metric: the MI between the distribution of source texts and the distribution of the summarizer's outputs.
        \item \textbf{A practical implementation of this metric: \texttt{COSMIC}.} We propose a practical implementation of \texttt{COSMIC} using an MI estimator and sentence embeddings~\footnote{A plug\&play python library built on top of HF transformers is available as supplementary material and will be released upon publication of this work.}.
        \item \textbf{An experimental evaluation.}
        We examine how well MI predicts the performance of downstream tasks in comparison to conventional metrics like \texttt{BERTScore} and \texttt{BARTScore}. Our findings demonstrate that MI is competitive with these metrics. Additionally, we illustrate its strong correlation with metrics trained to emulate human judgment.
    \end{enumerate}

    \section{Related Work}

   Assessing the quality of summaries poses a unique challenge due to the contextual nature of `quality', influenced by factors such as the audience, topic, and intended purpose of the summary. Even expertly crafted human summaries considered "gold standard" in one context may be perceived as subpar in a different setting~\cite{saziyabegum2017review, indu2016review}.\vspace{1mm}

    \noindent\textbf{Reference-free summary evaluation.}  Reference-free evaluation methods mostly rely on comparing the content of the summary with the content of the source text~\cite{louis-nenkova-2013-automatically, el2021automatic} and they rely on common overlap metrics such as ROUGE~\cite{lin-2004-rouge}, BLEU~\cite{papineni2002bleu} or BERTScore~\cite{zhang2020bertscore}. However, most reference-free metrics show some important limitations~\cite{deutsch-etal-2022-limitations}: they lack theoretical grounding.  Work such as BLANC~\cite{vasilyev-etal-2020-fill} and Information Difference~\cite{egan2021play} are closely related to our work. The former evaluates the actual reconstruction error of the source text with and without a summary, while the latter evaluates the information difference evaluated by the generative model. However, both lack the theoretical justification we introduce in \autoref{sec:task_driven_framework} and only evaluate the information based on the generative model. While these methods appear intuitive, they lack of theoretical support for their approaches, relying solely on empirical results and correlations with human evaluations. Thus, the results are often tied to a dataset and are affected by variance from the human evaluation. Conversely, we offer a theoretical framework applicable to any dataset and measure a tangible, well-defined success criterion: do the tasks yield the same output when performed on summaries as they do on the source texts?
\vspace{1mm}

    \noindent\textbf{Embedding-based evaluation.} \texttt{MAUVE}~\cite{pillutla2021mauve} first proposed a new information-theoretic metric to compare two text distributions based on embedding clustering; more recent work showed that the crucial element was the clustering step~\cite{pimentel2023usefulness}. They show that while embeddings (and the clusters they form) do not capture fluency or grammatical correction, they do grasp meaning and coherence, making them excellent tools for evaluation. %
\vspace{1mm}

    \noindent\textbf{Dataset difficulty and MI}. Measuring MI between the concepts and input in a dataset is not a novel idea. In fact, \citet{ethayarajh2022understanding} leverages the Arimoto information~\cite{arimoto1971information}, rediscovered and dubbed $\Vrond$-usable information by~\citet{xu2020theory} to assess the difficulty of a dataset. Similarly, \citet{bugliarello2020its} evaluate the difficulty of translating from one language to another. Following this trend,~\citet{kim2022mutual} reuses the point-wise MI between Gaussian distributions to evaluate text-to-images and image-to-text generative models.\vspace{1mm}

    \noindent\textbf{Mutual information for summarization.} The MI is a natural metric to optimize in summarization. It has been used as a score to select the most informative or surprising sentences in extractive summarization~\cite{padmakumar2021unsupervised} or as an alternative objective for text decoding~\cite{vanderpoel2022mutual}. In this paper,  we revisit the use of  MI but between the distribution of source texts and the distribution of the summarizer.

    \section{A Task-Driven Evaluation Framework}
    \label{sec:task_driven_framework}

\subsection{Background: Probabilistic models for text summarization}
We consider models for language summarization tasks that define a probability distribution over strings. More formally, these models are probability distributions $s$ over an output space $\mathcal{S}$ conditioned on an input text $\mathbf{t}$, where $\mathcal{S}$ is the set consisting of all possible strings that can be constructed from the vocabulary $\Omega$:
$ 
\mathcal{S} \triangleq \big\{ \texttt{BOS}  \circ \mathbf{s}  \circ \texttt{EOS}\,  | \,  \mathbf{s} \in \Omega^* \big\}, 
$
$\texttt{BOS}$ and $\texttt{EOS} $ stand for special reserved beginning-of-sequence and end-of-sequence tokens, respectively, and $\Omega^* $ denotes the Kleene closure of $\mathcal{S}$.

Today’s models for language summarization are typically parameterized by encoder-decoder or decoder-only architectures with attention mechanisms with trainable weights $\theta$. These models follow a local-normalization scheme, meaning that $\forall$ $i > 0$, $ p_\theta (\cdot | \mathbf{s}_{<i}, \mathbf{t} )$ defines a probability distribution over $\bar{\mathcal{S}} = \mathcal{S} \cup \texttt{EOS}$. The probability of a sequence $\mathbf{s}  = \langle s_0, s_1, \ldots \rangle$ can then be decomposed as:
\begin{equation}
    p_\theta (\mathbf{s}|\mathbf{t}) = \prod_{i=1}^{|\mathcal{S}|} 
   p_\theta (\mathbf{s}_i | \mathbf{s}_{<i},\mathbf{t}),
    \label{eq:sequence_probability}
\end{equation}
and $\mathbf{s}_{<i} = \langle s_0, \ldots, s_{i-1} \rangle$,  $s<1 = s_0 \triangleq  \texttt{BOS}$.

\subsection{Background: Information Theory} 

Information theory \cite{cover91} provides several tools for analyzing data and their associated probability distributions, including entropy and MI. These metrics are typically defined based on a "true" probability distribution, denoted as $p(c)$,  or the joint probability density function $p(\mathbf{t}, \mathbf{s})$ which may not be known but governs the behavior of random variables $\mathbf{T}$ and $\mathbf{S}$. The fundamental concept in information theory is \textbf{surprisal}, defined as $H(C= c) = -\log p(c)$,  and its expected value is termed \textbf{entropy:}
\begin{equation}
H(C) = \sum_{c \in \mathcal{C}} p(c) H(C = c). 
\end{equation}
Finally, another important concept is the \textbf{mutual information} (MI) between two random variables:
\begin{equation}
I(\mathbf{T};\mathbf{S}) = H(\mathbf{T}) - H(\mathbf{T}|\mathbf{S}). 
\end{equation} It captures the amount of information we get about one random variable when observing a realization of the other.
\textbf{The data-processing inequality} \cite{cover91} states that processing a random variable with a (possibly random) function $f(\cdot)$ can never increase its informativeness but only reduce its information content, expressed as: 
\begin{equation}
I(C;f(\mathbf{T})) \leq I(C;\mathbf{T}).\label{eq-data-processing}
\end{equation}
    The \textbf{rate-distortion} (RD) function of a discrete random variable $C$ for a given distortion function $\ell(c, \widehat{c})$ is defined as \cite[eq.~(1.4)]{csiszar74}:
    \begin{equation}
        R_{C,\ell}(D) \,\triangleq \min_{\rule{0mm}{4.3mm}\substack{p(\widehat{c}|c)}\, :\\ \, \Esp[\ell(C,\widehat{C})] \leq D} \!\! I(C;\widehat{C}).    
    \end{equation}
For further details, the reader is referred to Appendix~\ref{sec:RD-function} and \cite{cover91}.

\begin{figure}
    \centering
    \includegraphics[width=0.5\textwidth]{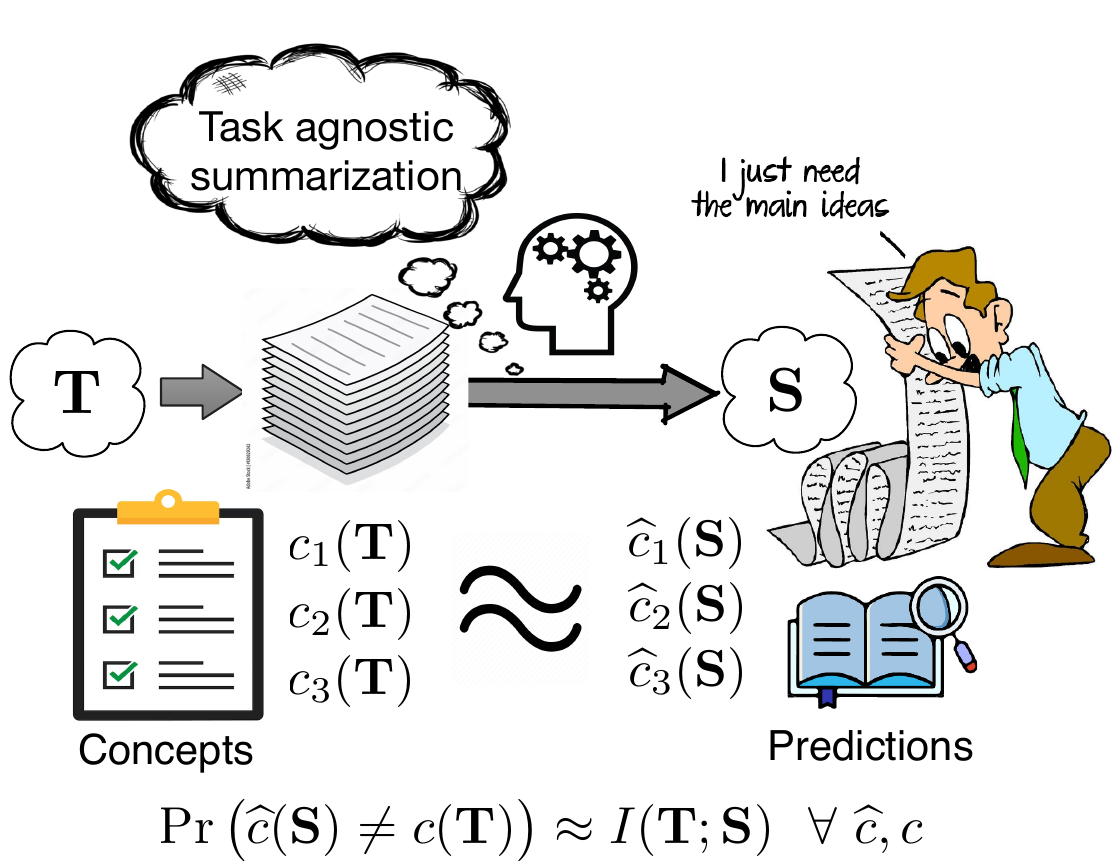}
    \caption{The summarizer is expected to generate summaries $\mathbf{S}$ for a given distribution of source texts $\mathbf{T}$, without prior knowledge of the specific application, e.g. predicting the concepts: $c_1,c_2,\dots $. The objective is to assess the discrepancy incurred when predicting from the generated summaries instead of the original source texts. We demonstrate that, for any undisclosed task $c$, the likelihood of error is constrained within bounds determined by  monotonous functions of the MI.}
    \label{fig:schema}
\end{figure}

\subsection{A task-oriented evaluation setting} 

Most summarization methods operate under the assumption that the downstream task of interest is unknown during the generation process, relying instead on a generic notion of summary quality. We assume in this work that the goal is to be able to perform similarly in terms of classification error on both the original texts $\mathbf{T}$ and the resulting summaries $\mathbf{S}$\footnote{Other goals are possible such as reducing the source text complexity, removing specific information etc... Our choice stems from the practical usability of the task-oriented definition for evaluation purposes}. Next, we formalize this evaluation metric based on the assumption of an unknown downstream task. 

Let $c : \Omega^* \rightarrow  \{1, \dots, m\}$ denote the target concept of interest which can be extracted from the initial texts $\mathbf{T}$ by applying $C\triangleq c(\mathbf{T})$; and similarly, let $\widehat{c}(\mathbf{S})  $ denote the predicted concept from the summaries $\mathbf{S}$ according to the underlying model $p_\theta (\mathbf{s}|\mathbf{t})$. The evaluation of the summarizer's quality with respect to the downstream task (unknown from the summarization model) can be assessed using the expected error rate, as illustrated in \autoref{fig:schema}. In other words, it involves the classifier,  determining the average probability of the extracted concept $\widehat{c}(\mathbf{S})  $ differing from the original concept $c(\mathbf{T})$ in the source text: 
\begin{equation}
P_e (c,\widehat{c},\theta) \triangleq  \Esp^{(\theta)} \big[  \1[ c(\mathbf{T}) \neq  \widehat{c}(\mathbf{S})] \big],\label{eq-perf-error}
\end{equation}
where the expectation is taken with respect to the joint distribution of text and summary $(\mathbf{T},\mathbf{S})$ based on the source texts distribution and the distribution over the summaries induced by the summarizer. Interestingly, it is not difficult to check by data-processing inequality that \eqref{eq-data-processing}: 
$
I(C;\mathbf{S})  \leq   I(C;\mathbf{T})
$
and thus, the output summaries $\mathbf{S}$ by the summarization model may not preserve all relevant information necessary to predict $C$ from $\mathbf{S}$, unless the summarization model is aware of the downstream task. However, identifying the set of relevant downstream tasks for various texts is challenging in practice. Consequently, \eqref{eq-perf-error} does not provide a satisfactory evaluation metric for summarization systems. This observation raises the central question studied in this work: 
\textbf{How much information about any downstream task is preserved in the summaries?}     

\section{A Task-Agnostic Quality Measure for Summarization}

\subsection{Theoretical Results}
\label{sec:theoretical_results}

In this section, we rigorously motivate the evaluation of summarization systems by measuring the MI between texts and the resulting summaries. Let's assume the existence of a random variable $\mathbf{T}$ representing source texts, which follows an unknown distribution inherent to the source text-domain.  Given a stochastic summarization system $p_\theta (\mathbf{s}|\mathbf{t})$ as defined in expression  \eqref{eq:sequence_probability},  we denote $\mathbf{S} \follows p_\theta (\mathbf{s}|\mathbf{T})$ the random variable representing summaries generated by the summarization system for these sources texts $\mathbf{T}$.

Consider the task $c(\mathbf{T})$ that we intend to execute on the source texts, where $C = c(\mathbf{T})$ represents the random variable denoting the outcomes of this task on source texts. We assume that the same task can be performed on the summaries and let  $\widehat{c}(\mathbf{S})$ denote the corresponding prediction.   Intuitively, if we can accurately predict the outcome of $C$ from the summaries, then we will say that the summarization system is high-quality since it preserves the necessary information for performing the task. The next proposition frames our information-theoretic bounds on the performance of any arbitrary downstream task. In particular, it shows that the expected error rate $P_e (c,\widehat{c},\theta) $ as defined in \eqref{eq-perf-error} can be upper and lower bounded by the MI between the texts and the summaries.

     \begin{prop}[Information-theoretic bounds]    \label{prop:prop_bounds} 
    Let $C = c(\mathbf{T}) $ denote the underlying concept variable and let $ \widehat{c}(\mathbf{S}) $ be the Bayes predictor of $C$ observing the output summaries $\mathbf{S}$, based on the  underlying summarization model $p_\theta (\mathbf{s}|\mathbf{t})$. The expected error rate satisfies: 
        \begin{align}
           P_e (c,\widehat{c},\theta) & \leq  1 -  \kappa  \exp\big(  I(\mathbf{T} ; \mathbf{S}) \big), \label{eq-upper-b}\\
            P_e (c,\widehat{c},\theta) & \geq \RDH^{-1}\big(I(\mathbf{T};\mathbf{S}) \big), \label{eq-upper-l} 
        \end{align}
       where  $\kappa \in (0,1)$ is a constant which does not depend on summaries; and $\RDH^{-1}(\cdot)$ is the inverse of the rate-distortion function using  $\ell_{01}(c,\widehat c) = \Ind[c \neq  \hat c]$. Furthermore, the lower bound holds for an arbitrary loss $\ell(\cdot,\cdot)$ measuring the disagreement between the concept and its predictive value: 
        \begin{equation*}
        \inf_{\widehat{c}(\cdot)} \Esp[\ell(C, \widehat{c}(\mathbf{S}))] \geq     \RD^{-1}\big(I(\mathbf{T};\mathbf{S}) \big).
        \end{equation*}
        \end{prop}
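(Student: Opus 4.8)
The plan is to handle the two directions separately. The lower bounds (including the general-$\ell$ statement) come from the data-processing inequality together with the definition of the rate-distortion function; the upper bound \eqref{eq-upper-b} comes from writing the Bayes error exactly and applying a pointwise entropy estimate followed by Jensen. I would first record the basic structure: since $C = c(\mathbf{T})$ is a deterministic function of $\mathbf{T}$, the summary $\mathbf{S} \sim p_\theta(\cdot\,|\,\mathbf{T})$ is conditionally independent of $C$ given $\mathbf{T}$, and any predictor $\widehat{c}(\mathbf{S})$ depends on $\mathbf{S}$ alone, so $C - \mathbf{T} - \mathbf{S} - \widehat{c}(\mathbf{S})$ is a Markov chain. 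In particular two applications of \eqref{eq-data-processing} give $I(C;\mathbf{S}) \le I(\mathbf{T};\mathbf{S})$ and, for any predictor, $I\big(C;\widehat{c}(\mathbf{S})\big) \le I(C;\mathbf{S})$.

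For the lower bounds, fix a (possibly randomized) predictor, set $\widehat{C} = \widehat{c}(\mathbf{S})$ and $D_0 = \Esp[\ell(C,\widehat{C})]$. The joint law of $(C,\widehat{C})$ is a feasible test channel $p(\widehat{c}\,|\,c)$ for the rate-distortion problem at distortion level $D_0$, so by definition $R_{C,\ell}(D_0) \le I(C;\widehat{C}) \le I(\mathbf{T};\mathbf{S})$, the last step by the chain above. Since $D \mapsto R_{C,\ell}(D)$ is non-increasing, $R_{C,\ell}(D_0) \le I(\mathbf{T};\mathbf{S})$ places $D_0$ in the set $\{D \ge 0 : R_{C,\ell}(D) \le I(\mathbf{T};\mathbf{S})\}$, hence $D_0 \ge R_{C,\ell}^{-1}\big(I(\mathbf{T};\mathbf{S})\big)$ with $R_{C,\ell}^{-1}(r) \triangleq \inf\{D \ge 0 : R_{C,\ell}(D) \le r\}$ (a nonempty set, since $R_{C,\ell}(D) = 0$ for $D$ large enough, e.g.\ a constant predictor). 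Taking the infimum over predictors gives the general-$\ell$ bound; applying it to the Bayes predictor for $\ell = \ell_{01}$, where $\Esp[\ell_{01}(C,\widehat{c}(\mathbf{S}))] = \Esp\big[\Ind[C \neq \widehat{c}(\mathbf{S})]\big] = P_e(c,\widehat{c},\theta)$, yields \eqref{eq-upper-l}.

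For the upper bound, write the Bayes error exactly as $P_e(c,\widehat{c},\theta) = 1 - \Esp_{\mathbf{S}}\big[\max_c p(c\,|\,\mathbf{S})\big]$ (posterior $p(c\,|\,\mathbf{s}) = \Pr(C = c\,|\,\mathbf{S} = \mathbf{s})$). For any probability vector, $H(p) = \sum_j p_j \log(1/p_j) \ge \log\big(1/\max_j p_j\big)$, so applied to the posterior $\max_c p(c\,|\,\mathbf{s}) \ge \exp\big(-H(C\,|\,\mathbf{S} = \mathbf{s})\big)$; and since $C$ is a function of $\mathbf{T}$, $H(C\,|\,\mathbf{S} = \mathbf{s}) \le H(\mathbf{T}\,|\,\mathbf{S} = \mathbf{s})$, whence $\max_c p(c\,|\,\mathbf{s}) \ge \exp\big(-H(\mathbf{T}\,|\,\mathbf{S} = \mathbf{s})\big)$. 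Averaging over $\mathbf{S}$ and using convexity of $\exp$ (Jensen), $\Esp_{\mathbf{S}}\big[\max_c p(c\,|\,\mathbf{S})\big] \ge \exp\big(-H(\mathbf{T}\,|\,\mathbf{S})\big) = \exp\big(-H(\mathbf{T})\big)\exp\big(I(\mathbf{T};\mathbf{S})\big)$. Setting $\kappa \triangleq \exp(-H(\mathbf{T}))$ --- which lies in $(0,1)$ whenever $\mathbf{T}$ has finite positive entropy (e.g.\ bounded-length texts) and depends only on the source distribution, never on the summarizer --- gives \eqref{eq-upper-b}; note that $\kappa \exp(I(\mathbf{T};\mathbf{S})) = \exp(-H(\mathbf{T}\,|\,\mathbf{S})) \le 1$, so the bound is informative precisely when the summaries retain most of the uncertainty in $\mathbf{T}$.

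I expect the upper bound to be the delicate part. The temptation is to bound $\max_c p(c\,|\,\mathbf{s})$ through $H(C\,|\,\mathbf{S})$, but that yields a bound in terms of $I(C;\mathbf{S})$ with a task-dependent constant, and chaining it with the data-processing inequality $I(C;\mathbf{S}) \le I(\mathbf{T};\mathbf{S})$ runs the wrong way and produces nothing usable in terms of $I(\mathbf{T};\mathbf{S})$; routing instead through $H(\mathbf{T}\,|\,\mathbf{S})$ is what makes $\kappa = e^{-H(\mathbf{T})}$ independent of the downstream task while keeping $\kappa e^{I(\mathbf{T};\mathbf{S})} \le 1$. The remaining care is bookkeeping: working in nats throughout so $\exp$ pairs with the entropies, assuming $\ell \ge 0$ and $H(\mathbf{T}) < \infty$, and checking that only the one-sided implication $R_{C,\ell}(D_0) \le r \Rightarrow D_0 \ge R_{C,\ell}^{-1}(r)$ is needed, so no continuity of $R_{C,\ell}$ has to be invoked.
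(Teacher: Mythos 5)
Your proposal is correct and follows essentially the same route as the paper: the lower bounds via the Markov chain $C \leftrightarrow \mathbf{T} \leftrightarrow \mathbf{S} \leftrightarrow \widehat{c}(\mathbf{S})$, data-processing, and the definition (then inversion) of the rate-distortion function, and the upper bound via the exact Bayes error, an entropy bound on the maximum posterior, Jensen's inequality, and $\kappa = \exp(-H(\mathbf{T}))$. The only differences are cosmetic: you bound $\max_c p(c|\mathbf{s}) \geq \exp(-H(C|\mathbf{s}))$ directly instead of detouring through the second-order R\'enyi entropy as the paper does, and you use a generalized inverse of $\RD$ in place of the paper's explicit case analysis on $(D_{\min},D_{\max})$.
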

\begin{proof}
The upper bound \eqref{eq-upper-b} relies on the fact that the predictor $\widehat{c}(\mathbf{S}) $ is the optimal (Bayes) classifier for which the expected error rate admits a well-known expression. The lower bound \eqref{eq-upper-l} uses data-processing inequality which implies that $I(\mathbf{T} ; \mathbf{S}) \geq I\big(c(\mathbf{T}) ; \widehat{c}(\mathbf{S})\big) $ and the definition of the rate-distortion function evaluated in the loss $\ell_{01}(c,\widehat c)$ noticing that its expectation yields the expected error rate   $P_e (c,\widehat{c},\theta)$ as defined in \eqref{eq-perf-error}. For further details, see Appendix~\ref{sec:full_proof}. 
\end{proof}

\begin{rem}
 The bounds in Proposition~\ref{prop:prop_bounds} imply that the expected error rate of any task predicted by observing the summaries is lower- and upper-bounded by functions of the MI between the text and the summaries $I(\mathbf{T} ; \mathbf{S})$. More precisely, the upper bound \eqref{eq-upper-b} is a monotonically decreasing function of the MI  while the lower bound \eqref{eq-upper-l} is a non-increasing function in the MI. The lower bound can be further simplified by evaluating the rate-distortion function, as shown in Appendix~\ref{sec:ap_rate_distortion}.  In other words, \textbf{greater MI  corresponds to improved the expected prediction performance on the summaries. Conversely, the expected error rate is bounded from above when MI is limited.} Interestingly, the arguments regarding the bounds do not depend on the considered task, suggesting that the MI can be used as a task-agnostic metric to evaluate the quality of summaries.

\end{rem}
  In the next section, we propose a practical method to estimate MI. In \autoref{sec:experimental_results}, we empirically show that it correlates well with the performance of the downstream tasks with other human judgment-based metrics and compare it to standard metrics. %

    \subsection{Estimating MI from samples}
    \label{sec:mi_estimation}

    While the MI captures an intuitive notion of information and is theoretically grounded,  estimating it accurately is notoriously challenging.  Therefore, following prior research~\cite{ethayarajh2022understanding, xu2020theory} we estimate Arimoto information~\cite{arimoto1971information} based on the KNIFE estimator~\cite{pichler2022differential}. Our method comprises three steps: (1) we project the source texts and the summaries into a continuous embedding space; (2) we fit a mutual information estimator onto these embeddings; and (3) we estimate the mutual information between the source texts and the summaries using the fitted estimator. We report the details of our method in Algorithm~\ref{alg:mi_estimation}.\vspace{1mm}

    \noindent\textbf{Mutual information estimator.} We rely on the KNIFE estimator~\cite{pichler2022differential} to estimate the differential entropy of the embeddings and then  mutual information. This estimator effectively relies on Gaussian Mixtures with $K$ modes to fit the density function and induces a soft-clustering for text generation evaluation~\cite{pillutla2021mauve, pimentel2023usefulness}. We found experimentally that the number of modes $K$ did not impact the performance significantly. We report the results for $K=4$ (see Appendix \ref{sec:ap_mi_estimation} for further details).\vspace{1mm}

    \noindent\textbf{Embeddings.} In order to obtain continuous representations of the source texts and the summaries, we mainly rely on the AnglE-embedders~\cite{li2023angleoptimized} as they are at the top of the MTEB leaderboard~\cite{muennighoff2023mteb} with a rather small model. In addition, we experimented with different embedders from the sentence transformers library~\cite{reimers-2019-sentence-bert}, mainly the paraphrase and sentence similarity embeddings. We find that our method was robust in the choice of the embedder and thus reported the results for the AnglE-embedders.

    \begin{algorithm}
        \caption{Evaluating the performance of a summarizer using KNIFE and sentence transformers}
        \label{alg:mi_estimation}
        \begin{algorithmic}[1]
            \Require{A dataset $\mathcal{D}_N = \set{(\mathbf{T}_i, \mathbf{S}_i)}_{i=1}^N$ }
            \Require{A pre-trained embedder $\operatorname{Emb}$}
            \Ensure{An estimation of the mutual information between $\mathbf{T}$ and $\mathbf{S}$}
            \State{$ \mathbf{E_T} \gets \set{\operatorname{Emb} (\textbf{T}_i)}_{i=1}^N$}
            \State{$ \mathbf{E_S} \gets \set{\operatorname{Emb} (\textbf{S}_i)}_{i=1}^N$}

            \State{$\widehat{I}_N(\textbf{T} ;  \textbf{S}) \gets \operatorname{KNIFE}(\mathbf{E_T}, \mathbf{E_S})$} \\
            \Return{$\widehat{I}_N(\textbf{T} ;  \textbf{S}) $} 
        \end{algorithmic}
    \end{algorithm}

    \section{Experimental Settings}
    \label{sec:experimental_settings}

While the bounds derived in \autoref{sec:theoretical_results} provide certain theoretical guarantees regarding the mutual information, their practical consequences must still be evaluated empirically. We present here two evaluation methods. First, we show that MI effectively predicts whether performing a downstream on the summaries would lead to the same outcome as if it were performed on the source text. This validates our task-oriented evaluation approach for summaries. Furthermore, we contrast our metric with metrics trained to emulate human judgments across various dimensions. Remarkably, we observe that even without any training, our metric closely aligns with human preferences.\vspace{1mm}
 
    \noindent\textbf{Datasets.} We select three summarization datasets for the English language: CNN/DailyMail~\cite{see-etal-2017-get, DBLP:conf/nips/HermannKGEKSB15}, XSum~\cite{Narayan2018DontGM} and MultiNews~\cite{fabbri-etal-2019-multi} and perform all evaluations on all datasets. We provide additional experiments in French and Spanish using the MLSUM~\cite{scialom2020mlsum} and XLSUM~\cite{hasan-etal-2021-xl} datasets. We report mixed results due to the lack of efficient multilingual embedders in \autoref{sec:additional_languages}.
\vspace{1mm}

    \noindent\textbf{Models.} We evaluate numerous summarizers from the HuggingFace hub, relying on different backbones, pretraining methods and finetuned on different datasets. We conduct our experiments on the PEGASUS suite of models~\cite{zhang2020pegasus}, on BERT large models~\cite{DBLP:journals/corr/abs-1910-13461} and on DistilBERT models~\cite{shleifer2020pretrained}. We also evaluated the models presented in the SEAHORSE benchmark~\cite{clark2023seahorse} through the generated summaries they proposed, which include MT5 models~\cite{xue2021mt5}, different variants of T5 models~\cite{raffel2023exploring}, and of the PaLM models~\cite{chowdhery2022palm}. The reader is referred to \autoref{sec:models_app} for models' details.\vspace{1mm}

    \noindent\textbf{Baseline metrics for quality assessment.} For all summaries, we evaluate the quality estimation metrics obtained by computing reference-free versions of the \texttt{ROUGE-L}, \texttt{BERTScore}~\cite{zhang2020bertscore} and \texttt{BARTScore}~\cite{NEURIPS2021_e4d2b6e6} between the source texts and the summaries. We average over the dataset to get a summarizer-level score that we can compare with the MI.

    \noindent\textbf{Human evaluation with SummEval.}  While the SummEval~\cite{fabbri2021summeval} dataset provides only a very limited number of human judgments, it contains enough aligned source, and AI-generated summaries to evaluate the MI metric. We use these unannotated data to evaluate the MI and rank the model accordingly; then we compare this ranking with the human evaluation provided by the SummEval benchmark. We provide a comparison with previous metrics in \autoref{tab:additional_baselines_human_eval}. It is worth noting that the evaluation is not fair for our metric as other are evaluated only on the annotated data, while ours is evaluated on the unannotated data.

    \subsection{Downstream tasks}

To demonstrate the efficacy of our metric in predicting the downstream task performance on the summaries, the ideal scenario would involve generating summaries and tasking various individuals with different tasks across diverse contexts. However, executing this at a larger scale is impractical. As an alternative, we suggest comparing the results of different algorithms (classifiers and embedders) applied to both the source texts and the generated summaries. A summarizer can be deemed effective if these algorithms yield consistent results when applied to both the summaries and the source texts.\vspace{1mm}

    \noindent\textbf{Classification tasks.} We selected four different tasks (sentiment analysis, policy classification, Emotion classification and ChatGPT detector) and corresponding classifiers from the Huggingface Hub to run on the source texts and summaries and compared their outputs. We report the expected error rate\eqref{eq-perf-error}, \textit{i.e.}, the classifier outputting a different label on the source text and the summary.

    \noindent\textbf{Embedders.} We compare the embeddings obtained from the source texts and the summaries from different models, the output of the embedding by the classifiers models, and paraphrase and sentence similarity embeddings~\cite{reimers-2019-sentence-bert}. We show the correlation of our metrics with the cosine similarity between the embeddings of the source texts and the corresponding summaries. Since embedders are supposed to abstract the information in the texts, good summarization models should produce embeddings close to the source texts' embeddings. 

    \subsection{Correlation With Learnt Metrics}

    There are not many available metrics trained on human judgment for summarization. We chose to evaluate our metric on the SEAHORSE metrics~\cite{clark2023seahorse} as they are the most recent and provide interpretable metrics.\vspace{1mm}

    \noindent\textbf{Seahorse metrics.}  These metrics --- learnt from human judgement --- assess summaries along $6$ axes: \textbf{Main ideas; Attribution; Comprehensible; Grammar; Repetition and Conciseness}. \texttt{Main ideas} and \texttt{Attribution}, \texttt{respectively}, measure if the main ideas of the source text are present in the summary and if the information in the summary does indeed come from the source text. \texttt{Comprehensible} and \texttt{Grammar} measure if the summary is fluent and grammatically correct, while \texttt{Repetition} and \texttt{conciseness} measure if there are NO repetitions and the summary is concise. The $\Pr(\text{Yes})$ metric corresponds to the average probability over the dataset that the SEAHORSE model predicts the answer \textit{Yes} to the corresponding question. While we would not expect our MI  metric to correlate with the Grammar or Comprehensible scores, it should strongly correlate with the Main Ideas and Attribution scores as these are proxies to the information contained in the summary.

    \section{Experimental Results}
    \label{sec:experimental_results}

    \begin{table}\centering
\caption{Common quality estimation metrics correlation with the performance on the downstream classification tasks. Where Sent. analysis stands for sentiment analysis, GPT det. for GPT detector, Topic. for topic classification, Policy for policy classification, Emotion for emotion classification and Emb. for paraphrase embedding.}
\label{tab:correlation_table_ct}
\resizebox{0.5\textwidth}{!}{\begin{tabular}{llrrrrr}
\toprule
 &  & Sent. analysis & GPT det. & Policy & Emotion & Emb. \\
  & Metric &  &  &  &  &  \\
\midrule
 & $I(S;T)$ & 0.63 & 0.59 & 0.58 & 0.56 & 0.81 \\
\cline{1-7}
\multirow[c]{9}{*}{Bas.} & BERTScore Precision & 0.53 & 0.68 & 0.47 & 0.46 & 0.70 \\
 & BERTScore Recall & 0.59 & 0.66 & 0.74 & 0.54 & 0.42 \\
 & BLANC & 0.59 & 0.59 & 0.66 & 0.60 & 0.38 \\
 & SMART1 & 0.55 & 0.63 & 0.47 & 0.47 & 0.28 \\
 & SMART2 & 0.55 & 0.63 & 0.47 & 0.47 & 0.28 \\
 & SMARTL & 0.55 & 0.63 & 0.47 & 0.47 & 0.28 \\
 & \texttt{BARTScore} & 0.51 & 0.73 & 0.42 & 0.48 & 0.62 \\
 & \texttt{BERTScore} & 0.64 & 0.75 & 0.64 & 0.58 & 0.61 \\
 & \texttt{ROUGE-L} & 0.56 & 0.55 & 0.54 & 0.47 & 0.29 \\
\cline{1-7}
\multirow[c]{2}{*}{SH.} & Attribution & 0.49 & 0.71 & 0.37 & 0.49 & 0.62 \\
 & Main ideas & 0.33 & 0.37 & 0.60 & 0.38 & 0.47 \\
\cline{1-7}
\bottomrule
\end{tabular}}
\vspace{-0.5cm}
\end{table}

   \noindent\textbf{Correlation with downstream tasks performance.} In \autoref{tab:correlation_table_ct}, we report the correlation between the different metrics and the expected error rate for different classification tasks and with the dot product for the Paraphrase embedding task. The MI  is competitive with both the common quality estimation metrics such as \texttt{BERTScore} and \texttt{BARTScore}. In addition, we report the correlation of the metrics trained on human judgement from the SEAHORSE benchmark to predict whether the main idea of the text is present in the summary and if all elements of the summary are attributable to the source text.\vspace{1mm}

   \begin{figure}
        \centering
        \includegraphics[width=0.49\textwidth]{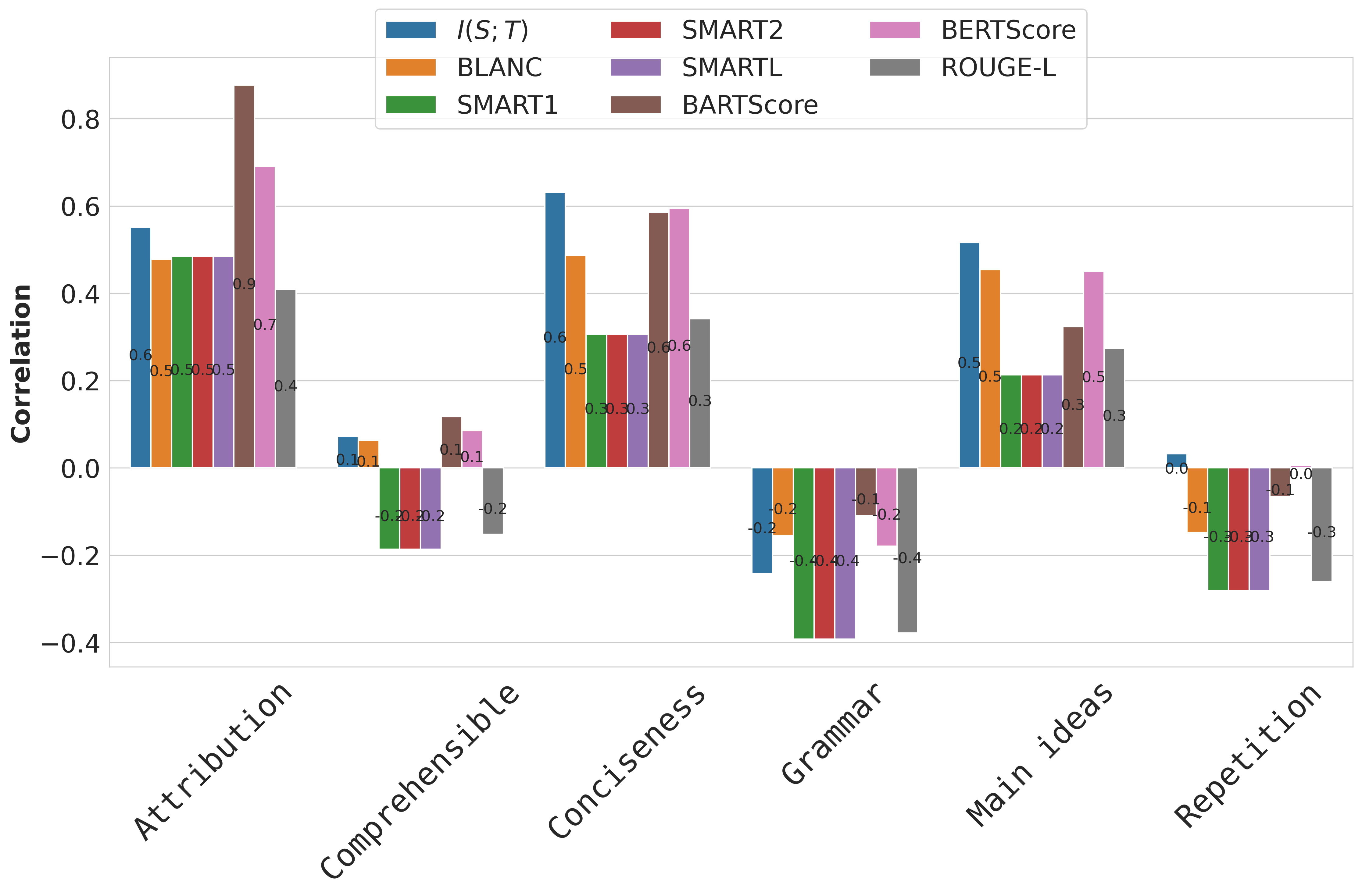}
        \caption{Spearman correlation with human judgment estimated by the SEAHORSE metrics.
        As one would expect the MI does correlate with \texttt{Attribution} and \texttt{Main ideas} but not with \texttt{comprehensible}, \texttt{grammar} or \texttt{repetition}.}
        \label{fig:comprehensive_correlation_seahorse_human_judgement}
        \vspace{-0.5cm}
    \end{figure}

    \noindent\textbf{Correlation with human-judgement-based metrics.} As the SEAHORSE metrics are trained to mimic human judgment, we can use them to assess the behavior of the MI. We found consistent and expected correlations with the different SEAHORSE metrics (\autoref{fig:comprehensive_correlation_seahorse_human_judgement}). \textbf{ The mutual information correlates well with \texttt{Attribution} and \texttt{Main Ideas}  but not with \texttt{Comprehensible}, \texttt{Grammar} and \texttt{Repetition}.} This is not surprising as one would expect the mutual information to capture the amount of information in the summary that is attributable to the source text and not the grammatical correctness or fluency of the summary. The high correlation with \texttt{Conciseness} is a rather surprising result. We believe this is because the conciseness correlates with the strength of the summarizer, which correlates with the MI  between the source texts and the summaries. The stronger the language model, the better the source texts' encoding as a summary. It is also plausible that the learned metrics are flawed in some ways, hindering the results.

    We observe that the MI correlates more consistently with these \texttt{Main idea}, \texttt{Attribution} and \texttt{Conciseness} scores than the common quality assessment metrics, e.g. \texttt{BERTScore} and \texttt{BARTScore} (see \autoref{fig:comprehensive_correlation_seahorse_human_judgement}). It suggests that in addition to being a good predictor of the performance of the downstream tasks, MI is also a better predictor of the human judgment of the quality of the summaries. Notably, the MI, a theoretical quantity derived from Shannon's MI, reproduces human judgment expectations without training on human judgment data.\vspace{1mm}

\noindent\textbf{Comparing summarizers with \texttt{COSMIC}.} In \autoref{fig:model_comparison}, we compare the MI of different models and report their size.  We observe that OOD models --- trained on Arxiv or medical data --- perform very poorly, whereas IN distribution models such as BART display significantly higher MI. Interestingly enough, the size of the model does not seem to be a good predictor of MI. In \autoref{sec:summarizers_hierarchy} we explore further use of the mutual information to compare the informativeness of different summarizers between themselves to construct a hierarchy based on their mutual informativeness.

    \begin{figure}
        \includegraphics[width=0.50\textwidth]{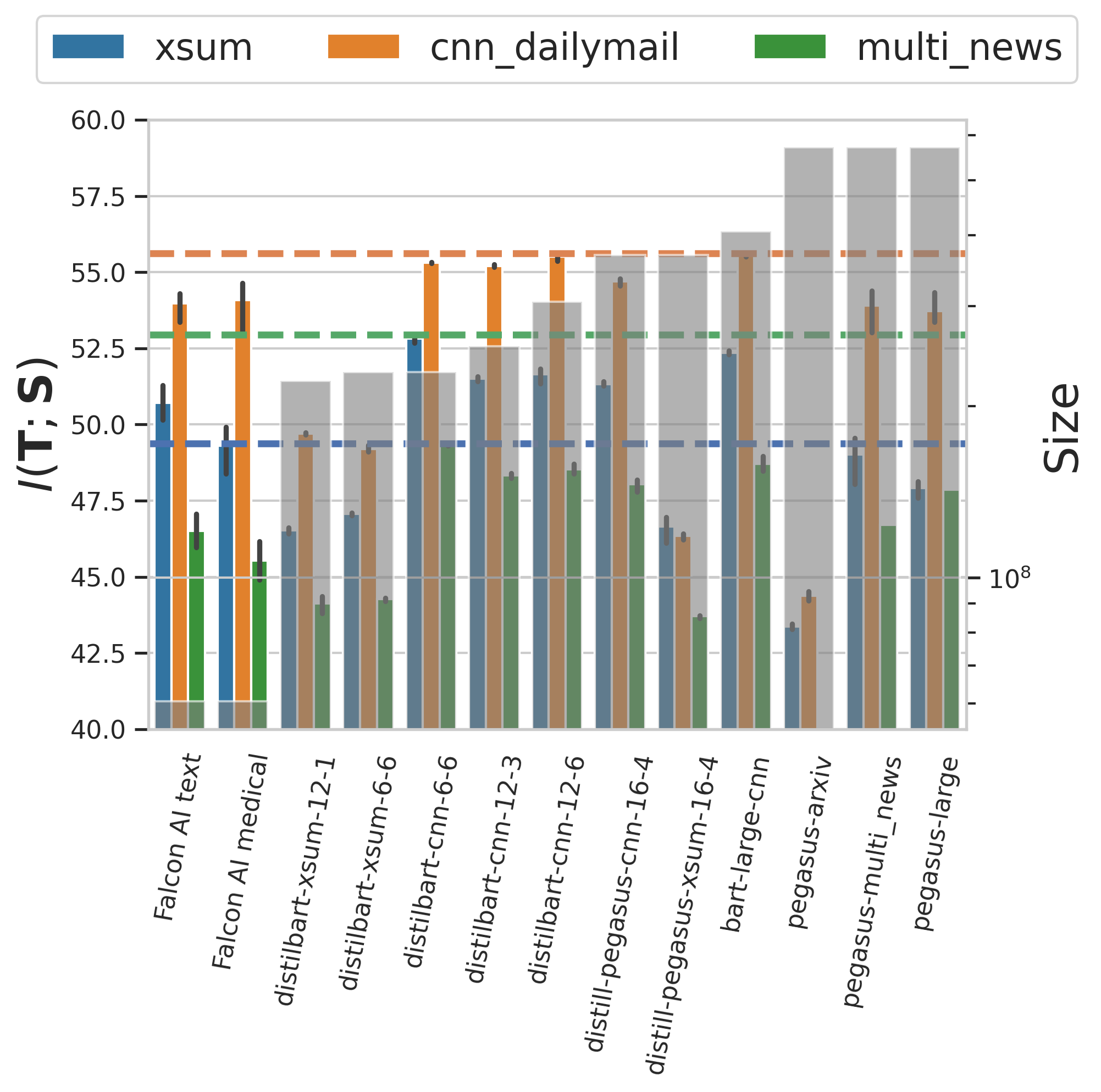}
        \caption{Comparison of models of different sizes and strengths, finetuned on different datasets regarding the MI between their summaries and the source text on different evaluation datasets. The dotted lines highlight the highest MI  for each dataset reached by a model, and the grey area represents the number of parameters of the model}
        \label{fig:model_comparison}
        \vspace{-0.5cm}
    \end{figure}

    \begin{table}
        \centering
        \resizebox{0.35\textwidth}{!}{
        \begin{tabular}{|c|c|c|c|c|}
\hline Metric & Coher. & Cons. & Flu. & Rel. \\
\hline & \multicolumn{4}{|c|}{ Reference Dependent } \\
\hline ROUGE-1 & .35 & .55 & .53 & .58 \\
\hline ROUGE-2 & .23 & .60 & 49 & .43 \\
\hline ROUGE-L & .12 & .12 & .26 & .35 \\
\hline BLEU & .22 & .05 & .33 & .38 \\
\hline CHRF & .35 & .63 & .56 & .55 \\
\hline BERTScore & .33 & -.03 & .14 & .20 \\
\hline MoverScore & .23 & -.05 & .26 & .35 \\
\hline BLEURT & .53 & .20 & .41 & 47 \\
\hline SMS & .27 & 60 & .36 & .40 \\
\hline SMART-1 & .43 & 67 & .64 & .67 \\
\hline SMART-2 & .42 & .75 & .63 & .58 \\
\hline \multirow[t]{2}{*}{ SMART-L } & .57 & .57 & .61 & .73 \\
\hline & \multicolumn{4}{|c|}{ Reference Free } \\
\hline PRISM & .23 & .60 & .36 & .37 \\
\hline T5-ANLI & .25 & .58 & .54 & .52 \\
\hline BARTScore & .35 & .62 & .49 & 45 \\
\hline BARTScore+CNN & .55 & .32 & .59 & .58 \\
\hline$Q^2$ & .25 & .75 & .58 & .45 \\
\hline RISE $_{\text {extMulti-News }}$ & .53 & .73 & .71 & .70 \\
\hline RISE $_{\text {SamSUM }}$ & .53 & .70 & .68 & .70 \\
\hline \multirow[t]{2}{*}{$\operatorname{RISE}_{C N N}$} & .53 & .73 & .75 & .70 \\
\hline & \multicolumn{4}{|c|}{ Ours } \\
\hline $I(\textbf{T}; \textbf{S})$ & .23 & .53 & .47 & .54 \\
\hline

\end{tabular}

        }
        \caption{Comparison of our method against many baselines on the SummEval Human evaluation dataset. We report the system-level Kendall's Tau correlation with human judgments.}
        \label{tab:additional_baselines_human_eval}
        \vspace{-0.05cm}
    \end{table}

    \section{Discussion of Quantitative Results}
    \label{sec:discussion}
    Our findings reveal that various conventional metrics do not consistently align with the effectiveness on downstream tasks. As illustrated in \autoref{fig:radar_chart_correlations_methods}, different metrics exhibit distinct behaviors and correlations with other metrics and the performance of downstream tasks.\vspace{1mm}

\noindent\textbf{SEAHORSE Metrics.} Notably, most SEAHORSE metrics demonstrate limited correlations with the effectiveness of downstream tasks. Unexpectedly, the \texttt{Main idea} metric performs less effectively compared to the \texttt{Attribution} metric.\vspace{1mm}

    \noindent\textbf{\texttt{BERTScore.}} \texttt{BERTScore} displays good correlation with the task-preserving capabilities of the summaries but have poor correlations with human judgements (real or estimated); whereas the mutual information is theoretically grounded and an all-around more consistent metric.

    \begin{figure*}
        \includegraphics[width=\textwidth]{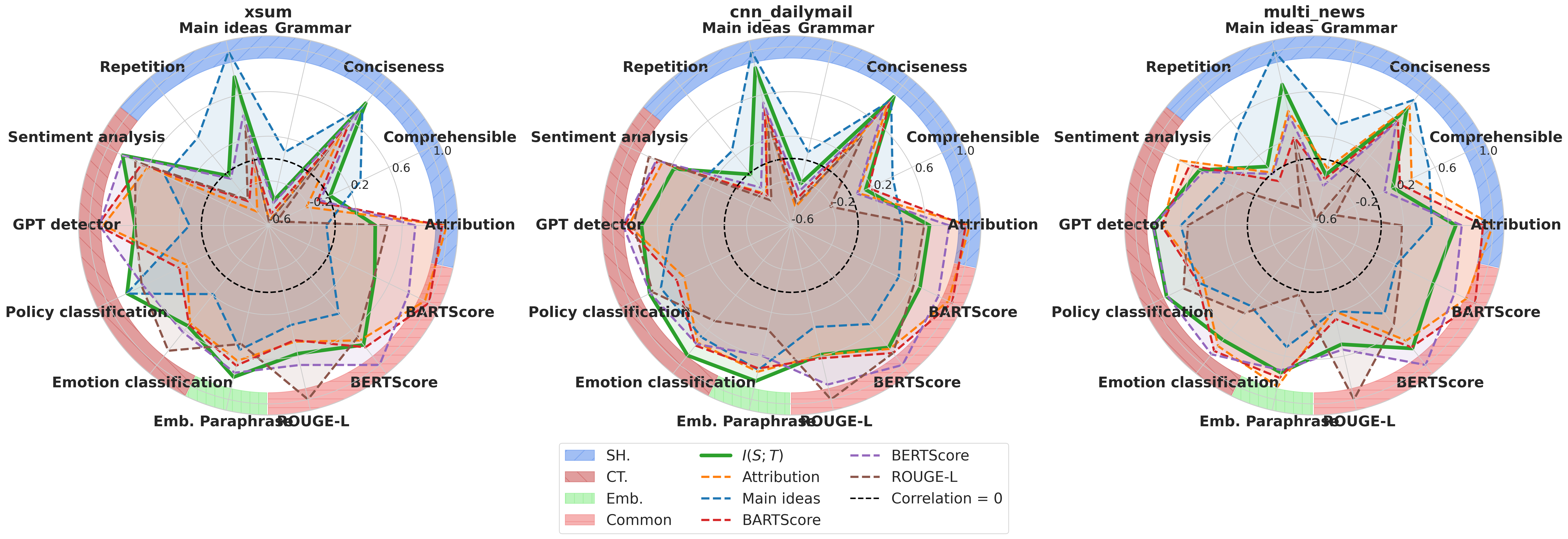}
        \caption{Correlation between the main metrics and different performance metrics for the different datasets. The MI (green) closely follows the behavior of the Attribution metric but is a better predictor of the performance of the downstream tasks (Sentiment analysis, GPT Detector, Topic classification, etc.). By contrast, ROUGE scores do not display consistent correlations. For metrics to be considered effective, they should consistently exhibit positive correlations with each other. In \autoref{sec:table_radar} we report the aggregated numerical results.}
        \label{fig:radar_chart_correlations_methods}
    \end{figure*}

  \begin{figure}[t!]
        \centering
        \includegraphics[width=0.48\textwidth]{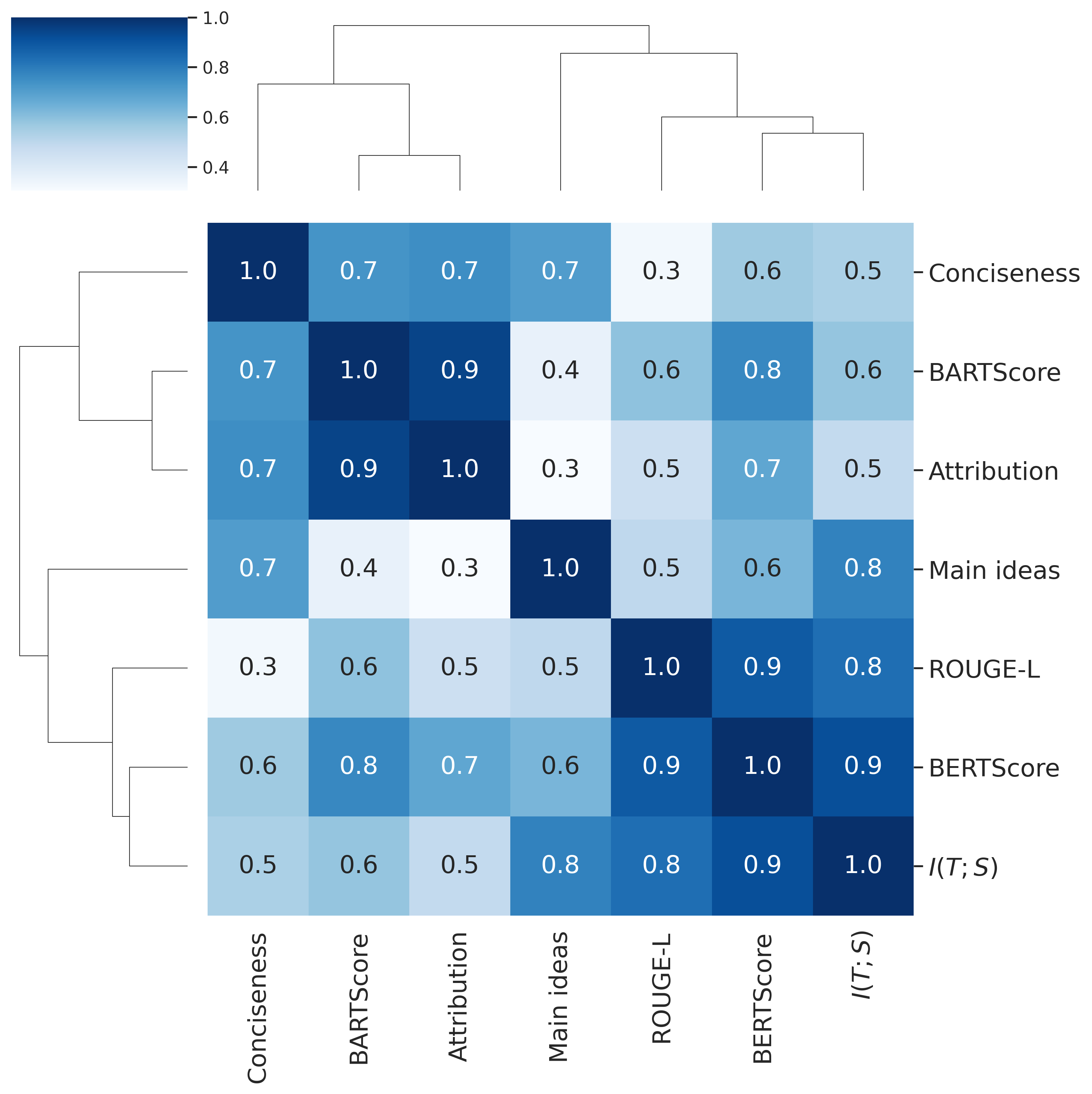}
        \caption{
        The correlation matrix illustrates the relationships among various metrics, with clustering based on their correlation similarity. This clustering indicates the degree of similarity between metrics in terms of their correlation with each other. The goal is to have a diverse set of grounded yet independent metrics that assess different aspects of text summarization quality.
        }
        \label{fig:corr_matrix_}
        \vspace{-0.5cm}
    \end{figure}

    \noindent\textbf{Behavior of MI.} The MI displays very similar behavior to the metric trained to detect whether the ideas presented in the summary come from the source text (Attribution) (\autoref{fig:radar_chart_correlations_methods}), but it surprisingly does not follow the same behavior as the metric trained to detect if the main idea is present (Main idea). Coincidentally, the main idea metric does not correlate with the expected error rate of the classification tasks. This may  be an artifact of the training of SEAHORSE benchmark, a limitation of our metric, or could indicate that answering the question "Is the content of the summary fully attributable to the content of the source text" is more relevant for downstream tasks than "Is the main idea of the source text present in the summary".\vspace{1mm}

    \noindent\textbf{Independence of the metrics.} While it seems that \texttt{Attribution} and the MI follow a similar trajectory in \autoref{fig:radar_chart_correlations_methods}, we found that the MI is not correlated with the \texttt{Attribution} metric (\autoref{fig:corr_matrix_}). This suggests that these two metrics are independent and could be used in conjunction to evaluate summarizers. When it comes to more standard metrics, we find two clusters of metrics that seem to be relatively independent of each other: one represented by \texttt{BERTScore} and comprising the MI and the other represented by \texttt{BARTScore}, which includes \texttt{Attribution}.

    \noindent\textbf{Correlations with Human Judgement.} While the MI correlates less on the SummEval judgment dataset than competitors, such as SMART or RISE, it correlates more with the downstream tasks performance (\autoref{tab:correlation_table_ct}) while not relying on golden summaries for comparison (SMART) or on very large pre-trained models (RISE). As mentioned previously, the comparison with other metrics is not completely fair as the MI is evaluated on unannotated data while the other metrics are evaluated on the (limited) annotated data. Nonetheless, the MI displays on-par performance with the other metrics on the SummEval dataset and is a promising metric for summarizer evaluation overall.

    \section{Summary and Discussion}

    We introduced a task-oriented setting for summary evaluation in which we can derive a principled and clear notion of the quality of a summarizer: the expected risk of performing a task on a summary instead of the source text. We connected this risk theoretically to the mutual information of the source texts and generated summaries and we bounded the risk on both sides using the mutual information. Even if these bounds are task-agnostic and thus potentially loose, we demonstrated experimentally that the mutual information indeed correlated well with the risk. High mutual information indicates that a task performed in the summaries is likely to produce the same outcome as in the source texts. \texttt{COSMIC} is therefore theoretically grounded in a reasonable task-oriented scenario. Our ability to estimate this mutual information practically and its correlation with downstream task performance further underscores its significance.  Our proposed method extends beyond summarization systems and could also contribute to the broader field of multi-modal generation evaluation~\cite{kim2022mutual}.

    \section*{Acknowledgements}
    This work was granted access to the HPC resources of IDRIS under the allocation 2023-AD011013290R2 made by GENCI.

    \section{Limitations \& Ethical considerations}

We have introduced a novel evaluation setting and a theoretically grounded metric for assessing summarizers, yet both have their limitations. Firstly, our setting assumes that the sole objective of a summary is to facilitate downstream task performance, and we define a good summary as one that preserves task outcomes. However, summaries can serve multiple purposes, such as aiding comprehension, acting as educational aids, or promoting the source text, which we do not account for in our approach. %

While mutual information is theoretically grounded, it is not without flaws and fails to capture all nuances of the summarization task. It serves as a tool to evaluate a summarizer's informativeness compared to other metrics lacking theoretical grounding.

It is imperative to use mutual information in conjunction with other metrics to evaluate summaries comprehensively, as it solely addresses the informativeness of summaries about their source text. This metric does not assess grammaticality. Consequently, high mutual information values may arise from imperceptible artefacts that render the summary highly informative about the source text yet unintelligible to human readers.

Moreover, our method indirectly evaluates mutual information in the continuous domain by assessing the mutual information between embeddings generated by a fixed language model. The choice of this model significantly impacts mutual information estimation and the parameters of the estimation tool used.

    \bibliography{custom, maths}

\begin{thebibliography}{53}
\expandafter\ifx\csname natexlab\endcsname\relax\def\natexlab#1{#1}\fi

\bibitem[{Allahyari et~al.(2017)Allahyari, Pouriyeh, Assefi, Safaei, Trippe, Gutierrez, and Kochut}]{allahyari2017text}
Mehdi Allahyari, Seyedamin Pouriyeh, Mehdi Assefi, Saeid Safaei, Elizabeth~D Trippe, Juan~B Gutierrez, and Krys Kochut. 2017.
\newblock Text summarization techniques: a brief survey.
\newblock \emph{arXiv preprint arXiv:1707.02268}.

\bibitem[{Arimoto(1971)}]{arimoto1971information}
Suguru Arimoto. 1971.
\newblock Information-theoretical considerations on estimation problems.
\newblock \emph{Information and control}, 19(3):181--194.

\bibitem[{Bugliarello et~al.(2020)Bugliarello, Mielke, Anastasopoulos, Cotterell, and Okazaki}]{bugliarello2020its}
Emanuele Bugliarello, Sabrina~J. Mielke, Antonios Anastasopoulos, Ryan Cotterell, and Naoaki Okazaki. 2020.
\newblock \href {http://arxiv.org/abs/2005.02354} {It's easier to translate out of english than into it: Measuring neural translation difficulty by cross-mutual information}.

\bibitem[{Chowdhery et~al.(2022)Chowdhery, Narang, Devlin, Bosma, Mishra, Roberts, Barham, Chung, Sutton, Gehrmann, Schuh, Shi, Tsvyashchenko, Maynez, Rao, Barnes, Tay, Shazeer, Prabhakaran, Reif, Du, Hutchinson, Pope, Bradbury, Austin, Isard, Gur-Ari, Yin, Duke, Levskaya, Ghemawat, Dev, Michalewski, Garcia, Misra, Robinson, Fedus, Zhou, Ippolito, Luan, Lim, Zoph, Spiridonov, Sepassi, Dohan, Agrawal, Omernick, Dai, Pillai, Pellat, Lewkowycz, Moreira, Child, Polozov, Lee, Zhou, Wang, Saeta, Diaz, Firat, Catasta, Wei, Meier-Hellstern, Eck, Dean, Petrov, and Fiedel}]{chowdhery2022palm}
Aakanksha Chowdhery, Sharan Narang, Jacob Devlin, Maarten Bosma, Gaurav Mishra, Adam Roberts, Paul Barham, Hyung~Won Chung, Charles Sutton, Sebastian Gehrmann, Parker Schuh, Kensen Shi, Sasha Tsvyashchenko, Joshua Maynez, Abhishek Rao, Parker Barnes, Yi~Tay, Noam Shazeer, Vinodkumar Prabhakaran, Emily Reif, Nan Du, Ben Hutchinson, Reiner Pope, James Bradbury, Jacob Austin, Michael Isard, Guy Gur-Ari, Pengcheng Yin, Toju Duke, Anselm Levskaya, Sanjay Ghemawat, Sunipa Dev, Henryk Michalewski, Xavier Garcia, Vedant Misra, Kevin Robinson, Liam Fedus, Denny Zhou, Daphne Ippolito, David Luan, Hyeontaek Lim, Barret Zoph, Alexander Spiridonov, Ryan Sepassi, David Dohan, Shivani Agrawal, Mark Omernick, Andrew~M. Dai, Thanumalayan~Sankaranarayana Pillai, Marie Pellat, Aitor Lewkowycz, Erica Moreira, Rewon Child, Oleksandr Polozov, Katherine Lee, Zongwei Zhou, Xuezhi Wang, Brennan Saeta, Mark Diaz, Orhan Firat, Michele Catasta, Jason Wei, Kathy Meier-Hellstern, Douglas Eck, Jeff Dean, Slav Petrov, and Noah Fiedel. 2022.
\newblock \href {http://arxiv.org/abs/2204.02311} {Palm: Scaling language modeling with pathways}.

\bibitem[{Clark et~al.(2023)Clark, Rijhwani, Gehrmann, Maynez, Aharoni, Nikolaev, Sellam, Siddhant, Das, and Parikh}]{clark2023seahorse}
Elizabeth Clark, Shruti Rijhwani, Sebastian Gehrmann, Joshua Maynez, Roee Aharoni, Vitaly Nikolaev, Thibault Sellam, Aditya Siddhant, Dipanjan Das, and Ankur~P. Parikh. 2023.
\newblock \href {http://arxiv.org/abs/2305.13194} {Seahorse: A multilingual, multifaceted dataset for summarization evaluation}.

\bibitem[{Cover and Thomas(2006)}]{cover91}
T.~M. Cover and J.~A. Thomas. 2006.
\newblock \emph{{E}lements of {I}nformation {T}heory}, 2nd edition.
\newblock Wiley, New York, NY.

\bibitem[{Csisz{\'a}r(1974)}]{csiszar74}
I~Csisz{\'a}r. 1974.
\newblock On an extremum problem of information theory.
\newblock \emph{Studia Scientiarum Mathematicarum Hungarica}, 9(1):57--71.

\bibitem[{Deutsch et~al.(2021)Deutsch, Bedrax-Weiss, and Roth}]{deutsch2021towards}
Daniel Deutsch, Tania Bedrax-Weiss, and Dan Roth. 2021.
\newblock Towards question-answering as an automatic metric for evaluating the content quality of a summary.
\newblock \emph{Transactions of the Association for Computational Linguistics}, 9:774--789.

\bibitem[{Deutsch et~al.(2022)Deutsch, Dror, and Roth}]{deutsch-etal-2022-limitations}
Daniel Deutsch, Rotem Dror, and Dan Roth. 2022.
\newblock \href {https://doi.org/10.18653/v1/2022.emnlp-main.753} {On the limitations of reference-free evaluations of generated text}.
\newblock In \emph{Proceedings of the 2022 Conference on Empirical Methods in Natural Language Processing}, pages 10960--10977, Abu Dhabi, United Arab Emirates. Association for Computational Linguistics.

\bibitem[{Egan et~al.(2021)Egan, Vasilyev, and Bohannon}]{egan2021play}
Nicholas Egan, Oleg Vasilyev, and John Bohannon. 2021.
\newblock \href {http://arxiv.org/abs/2103.10918} {Play the shannon game with language models: A human-free approach to summary evaluation}.

\bibitem[{El-Kassas et~al.(2021)El-Kassas, Salama, Rafea, and Mohamed}]{el2021automatic}
Wafaa~S El-Kassas, Cherif~R Salama, Ahmed~A Rafea, and Hoda~K Mohamed. 2021.
\newblock Automatic text summarization: A comprehensive survey.
\newblock \emph{Expert systems with applications}, 165:113679.

\bibitem[{Ethayarajh et~al.(2022)Ethayarajh, Choi, and Swayamdipta}]{ethayarajh2022understanding}
Kawin Ethayarajh, Yejin Choi, and Swabha Swayamdipta. 2022.
\newblock \href {http://arxiv.org/abs/2110.08420} {Understanding dataset difficulty with $\mathcal{V}$-usable information}.

\bibitem[{Fabbri et~al.(2019)Fabbri, Li, She, Li, and Radev}]{fabbri-etal-2019-multi}
Alexander Fabbri, Irene Li, Tianwei She, Suyi Li, and Dragomir Radev. 2019.
\newblock \href {https://doi.org/10.18653/v1/P19-1102} {Multi-news: A large-scale multi-document summarization dataset and abstractive hierarchical model}.
\newblock In \emph{Proceedings of the 57th Annual Meeting of the Association for Computational Linguistics}, pages 1074--1084, Florence, Italy. Association for Computational Linguistics.

\bibitem[{Fabbri et~al.(2021)Fabbri, Kryściński, McCann, Xiong, Socher, and Radev}]{fabbri2021summeval}
Alexander~R. Fabbri, Wojciech Kryściński, Bryan McCann, Caiming Xiong, Richard Socher, and Dragomir Radev. 2021.
\newblock \href {http://arxiv.org/abs/2007.12626} {Summeval: Re-evaluating summarization evaluation}.

\bibitem[{Giannakopoulos et~al.(2008)Giannakopoulos, Karkaletsis, Vouros, and Stamatopoulos}]{10.1145/1410358.1410359}
George Giannakopoulos, Vangelis Karkaletsis, George Vouros, and Panagiotis Stamatopoulos. 2008.
\newblock \href {https://doi.org/10.1145/1410358.1410359} {Summarization system evaluation revisited: N-gram graphs}.
\newblock \emph{ACM Trans. Speech Lang. Process.}, 5(3).

\bibitem[{Graham and Baldwin(2014)}]{graham-baldwin-2014-testing}
Yvette Graham and Timothy Baldwin. 2014.
\newblock \href {https://doi.org/10.3115/v1/D14-1020} {Testing for significance of increased correlation with human judgment}.
\newblock In \emph{Proceedings of the 2014 Conference on Empirical Methods in Natural Language Processing ({EMNLP})}, pages 172--176, Doha, Qatar. Association for Computational Linguistics.

\bibitem[{Hasan et~al.(2021)Hasan, Bhattacharjee, Islam, Mubasshir, Li, Kang, Rahman, and Shahriyar}]{hasan-etal-2021-xl}
Tahmid Hasan, Abhik Bhattacharjee, Md.~Saiful Islam, Kazi Mubasshir, Yuan-Fang Li, Yong-Bin Kang, M.~Sohel Rahman, and Rifat Shahriyar. 2021.
\newblock \href {https://aclanthology.org/2021.findings-acl.413} {{XL}-sum: Large-scale multilingual abstractive summarization for 44 languages}.
\newblock In \emph{Findings of the Association for Computational Linguistics: ACL-IJCNLP 2021}, pages 4693--4703, Online. Association for Computational Linguistics.

\bibitem[{Hermann et~al.(2015)Hermann, Kociský, Grefenstette, Espeholt, Kay, Suleyman, and Blunsom}]{DBLP:conf/nips/HermannKGEKSB15}
Karl~Moritz Hermann, Tomás Kociský, Edward Grefenstette, Lasse Espeholt, Will Kay, Mustafa Suleyman, and Phil Blunsom. 2015.
\newblock \href {http://papers.nips.cc/paper/5945-teaching-machines-to-read-and-comprehend} {Teaching machines to read and comprehend}.
\newblock In \emph{NIPS}, pages 1693--1701.

\bibitem[{Indu and Kavitha(2016)}]{indu2016review}
M~Indu and KV~Kavitha. 2016.
\newblock Review on text summarization evaluation methods.
\newblock In \emph{2016 international conference on research advances in integrated navigation systems (RAINS)}, pages 1--4. IEEE.

\bibitem[{Kim et~al.(2022)Kim, Kim, Lee, Yoo, and Lee}]{kim2022mutual}
Jin-Hwa Kim, Yunji Kim, Jiyoung Lee, Kang~Min Yoo, and Sang-Woo Lee. 2022.
\newblock \href {http://arxiv.org/abs/2205.13445} {Mutual information divergence: A unified metric for multimodal generative models}.

\bibitem[{Kocmi et~al.(2021)Kocmi, Federmann, Grundkiewicz, Junczys-Dowmunt, Matsushita, and Menezes}]{kocmi-etal-2021-ship}
Tom Kocmi, Christian Federmann, Roman Grundkiewicz, Marcin Junczys-Dowmunt, Hitokazu Matsushita, and Arul Menezes. 2021.
\newblock \href {https://aclanthology.org/2021.wmt-1.57} {To ship or not to ship: An extensive evaluation of automatic metrics for machine translation}.
\newblock In \emph{Proceedings of the Sixth Conference on Machine Translation}, pages 478--494, Online. Association for Computational Linguistics.

\bibitem[{Kryscinski et~al.(2020)Kryscinski, McCann, Xiong, and Socher}]{kryscinski-etal-2020-evaluating}
Wojciech Kryscinski, Bryan McCann, Caiming Xiong, and Richard Socher. 2020.
\newblock \href {https://doi.org/10.18653/v1/2020.emnlp-main.750} {Evaluating the factual consistency of abstractive text summarization}.
\newblock In \emph{Proceedings of the 2020 Conference on Empirical Methods in Natural Language Processing (EMNLP)}, pages 9332--9346, Online. Association for Computational Linguistics.

\bibitem[{Lewis et~al.(2019)Lewis, Liu, Goyal, Ghazvininejad, Mohamed, Levy, Stoyanov, and Zettlemoyer}]{DBLP:journals/corr/abs-1910-13461}
Mike Lewis, Yinhan Liu, Naman Goyal, Marjan Ghazvininejad, Abdelrahman Mohamed, Omer Levy, Veselin Stoyanov, and Luke Zettlemoyer. 2019.
\newblock \href {http://arxiv.org/abs/1910.13461} {{BART:} denoising sequence-to-sequence pre-training for natural language generation, translation, and comprehension}.
\newblock \emph{CoRR}, abs/1910.13461.

\bibitem[{Li and Li(2023)}]{li2023angleoptimized}
Xianming Li and Jing Li. 2023.
\newblock \href {http://arxiv.org/abs/2309.12871} {Angle-optimized text embeddings}.

\bibitem[{Lin(2004)}]{lin-2004-rouge}
Chin-Yew Lin. 2004.
\newblock \href {https://aclanthology.org/W04-1013} {{ROUGE}: A package for automatic evaluation of summaries}.
\newblock In \emph{Text Summarization Branches Out}, pages 74--81, Barcelona, Spain. Association for Computational Linguistics.

\bibitem[{Liu et~al.(2023)Liu, Iter, Xu, Wang, Xu, and Zhu}]{liu2023geval}
Yang Liu, Dan Iter, Yichong Xu, Shuohang Wang, Ruochen Xu, and Chenguang Zhu. 2023.
\newblock \href {http://arxiv.org/abs/2303.16634} {G-eval: Nlg evaluation using gpt-4 with better human alignment}.

\bibitem[{Louis and Nenkova(2013)}]{louis-nenkova-2013-automatically}
Annie Louis and Ani Nenkova. 2013.
\newblock \href {https://doi.org/10.1162/COLI_a_00123} {Automatically assessing machine summary content without a gold standard}.
\newblock \emph{Computational Linguistics}, 39(2):267--300.

\bibitem[{Muennighoff et~al.(2023)Muennighoff, Tazi, Magne, and Reimers}]{muennighoff2023mteb}
Niklas Muennighoff, Nouamane Tazi, Loïc Magne, and Nils Reimers. 2023.
\newblock \href {http://arxiv.org/abs/2210.07316} {Mteb: Massive text embedding benchmark}.

\bibitem[{Narayan et~al.(2018)Narayan, Cohen, and Lapata}]{Narayan2018DontGM}
Shashi Narayan, Shay~B. Cohen, and Mirella Lapata. 2018.
\newblock Don't give me the details, just the summary! topic-aware convolutional neural networks for extreme summarization.
\newblock \emph{ArXiv}, abs/1808.08745.

\bibitem[{Padmakumar and He(2021)}]{padmakumar2021unsupervised}
Vishakh Padmakumar and He~He. 2021.
\newblock \href {http://arxiv.org/abs/2102.06272} {Unsupervised extractive summarization using pointwise mutual information}.

\bibitem[{Papineni et~al.(2002)Papineni, Roukos, Ward, and Zhu}]{papineni2002bleu}
Kishore Papineni, Salim Roukos, Todd Ward, and Wei-Jing Zhu. 2002.
\newblock Bleu: a method for automatic evaluation of machine translation.
\newblock In \emph{Proceedings of the 40th annual meeting of the Association for Computational Linguistics}, pages 311--318.

\bibitem[{Pichler et~al.(2022)Pichler, Colombo, Boudiaf, Koliander, and Piantanida}]{pichler2022differential}
Georg Pichler, Pierre Colombo, Malik Boudiaf, Günther Koliander, and Pablo Piantanida. 2022.
\newblock \href {http://arxiv.org/abs/2202.06618} {A differential entropy estimator for training neural networks}.

\bibitem[{Pillutla et~al.(2021)Pillutla, Swayamdipta, Zellers, Thickstun, Welleck, Choi, and Harchaoui}]{pillutla2021mauve}
Krishna Pillutla, Swabha Swayamdipta, Rowan Zellers, John Thickstun, Sean Welleck, Yejin Choi, and Zaid Harchaoui. 2021.
\newblock \href {http://arxiv.org/abs/2102.01454} {Mauve: Measuring the gap between neural text and human text using divergence frontiers}.

\bibitem[{Pimentel et~al.(2023)Pimentel, Meister, and Cotterell}]{pimentel2023usefulness}
Tiago Pimentel, Clara Meister, and Ryan Cotterell. 2023.
\newblock \href {http://arxiv.org/abs/2205.16001} {On the usefulness of embeddings, clusters and strings for text generator evaluation}.

\bibitem[{Pu et~al.(2023)Pu, Gao, and Wan}]{pu2023summary}
Xiao Pu, Mingqi Gao, and Xiaojun Wan. 2023.
\newblock \href {http://arxiv.org/abs/2305.15044} {Is summary useful or not? an extrinsic human evaluation of text summaries on downstream tasks}.

\bibitem[{Raffel et~al.(2023)Raffel, Shazeer, Roberts, Lee, Narang, Matena, Zhou, Li, and Liu}]{raffel2023exploring}
Colin Raffel, Noam Shazeer, Adam Roberts, Katherine Lee, Sharan Narang, Michael Matena, Yanqi Zhou, Wei Li, and Peter~J. Liu. 2023.
\newblock \href {http://arxiv.org/abs/1910.10683} {Exploring the limits of transfer learning with a unified text-to-text transformer}.

\bibitem[{Rei et~al.(2020)Rei, Stewart, Farinha, and Lavie}]{rei-etal-2020-comet}
Ricardo Rei, Craig Stewart, Ana~C Farinha, and Alon Lavie. 2020.
\newblock \href {https://doi.org/10.18653/v1/2020.emnlp-main.213} {{COMET}: A neural framework for {MT} evaluation}.
\newblock In \emph{Proceedings of the 2020 Conference on Empirical Methods in Natural Language Processing (EMNLP)}, pages 2685--2702, Online. Association for Computational Linguistics.

\bibitem[{Reimers and Gurevych(2019)}]{reimers-2019-sentence-bert}
Nils Reimers and Iryna Gurevych. 2019.
\newblock \href {https://arxiv.org/abs/1908.10084} {Sentence-bert: Sentence embeddings using siamese bert-networks}.
\newblock In \emph{Proceedings of the 2019 Conference on Empirical Methods in Natural Language Processing}. Association for Computational Linguistics.

\bibitem[{R{\'e}nyi(1961)}]{Rnyi1961OnMO}
Alfr{\'e}d R{\'e}nyi. 1961.
\newblock \href {https://api.semanticscholar.org/CorpusID:123056571} {On measures of entropy and information}.

\bibitem[{Saziyabegum and Sajja(2017)}]{saziyabegum2017review}
Saiyed Saziyabegum and PS~Sajja. 2017.
\newblock Review on text summarization evaluation methods.
\newblock \emph{Indian J. Comput. Sci. Eng}, 8(4):497500.

\bibitem[{Scialom et~al.(2020)Scialom, Dray, Lamprier, Piwowarski, and Staiano}]{scialom2020mlsum}
Thomas Scialom, Paul-Alexis Dray, Sylvain Lamprier, Benjamin Piwowarski, and Jacopo Staiano. 2020.
\newblock Mlsum: The multilingual summarization corpus.
\newblock \emph{arXiv preprint arXiv:2004.14900}.

\bibitem[{See et~al.(2017)See, Liu, and Manning}]{see-etal-2017-get}
Abigail See, Peter~J. Liu, and Christopher~D. Manning. 2017.
\newblock \href {https://doi.org/10.18653/v1/P17-1099} {Get to the point: Summarization with pointer-generator networks}.
\newblock In \emph{Proceedings of the 55th Annual Meeting of the Association for Computational Linguistics (Volume 1: Long Papers)}, pages 1073--1083, Vancouver, Canada. Association for Computational Linguistics.

\bibitem[{Shleifer and Rush(2020)}]{shleifer2020pretrained}
Sam Shleifer and Alexander~M. Rush. 2020.
\newblock \href {http://arxiv.org/abs/2010.13002} {Pre-trained summarization distillation}.

\bibitem[{Sui et~al.(2023)Sui, Zhao, Jing, and Mao}]{sui2023evaluating}
Ce~Sui, Xiaosheng Zhao, Tao Jing, and Yi~Mao. 2023.
\newblock \href {http://arxiv.org/abs/2307.04994} {Evaluating summary statistics with mutual information for cosmological inference}.

\bibitem[{Tratz and Hovy(2008)}]{tratz2008summarization}
Stephen Tratz and Eduard~H Hovy. 2008.
\newblock Summarization evaluation using transformed basic elements.
\newblock In \emph{TAC}. Citeseer.

\bibitem[{van~der Poel et~al.(2022)van~der Poel, Cotterell, and Meister}]{vanderpoel2022mutual}
Liam van~der Poel, Ryan Cotterell, and Clara Meister. 2022.
\newblock \href {http://arxiv.org/abs/2210.13210} {Mutual information alleviates hallucinations in abstractive summarization}.

\bibitem[{Vanderwende et~al.(2007)Vanderwende, Suzuki, Brockett, and Nenkova}]{VANDERWENDE20071606}
Lucy Vanderwende, Hisami Suzuki, Chris Brockett, and Ani Nenkova. 2007.
\newblock \href {https://doi.org/https://doi.org/10.1016/j.ipm.2007.01.023} {Beyond sumbasic: Task-focused summarization with sentence simplification and lexical expansion}.
\newblock \emph{Information Processing \& Management}, 43(6):1606--1618.
\newblock Text Summarization.

\bibitem[{Vasilyev et~al.(2020)Vasilyev, Dharnidharka, and Bohannon}]{vasilyev-etal-2020-fill}
Oleg Vasilyev, Vedant Dharnidharka, and John Bohannon. 2020.
\newblock \href {https://doi.org/10.18653/v1/2020.eval4nlp-1.2} {Fill in the {BLANC}: Human-free quality estimation of document summaries}.
\newblock In \emph{Proceedings of the First Workshop on Evaluation and Comparison of NLP Systems}, pages 11--20, Online. Association for Computational Linguistics.

\bibitem[{Xu et~al.(2020)Xu, Zhao, Song, Stewart, and Ermon}]{xu2020theory}
Yilun Xu, Shengjia Zhao, Jiaming Song, Russell Stewart, and Stefano Ermon. 2020.
\newblock \href {http://arxiv.org/abs/2002.10689} {A theory of usable information under computational constraints}.

\bibitem[{Xue et~al.(2021)Xue, Constant, Roberts, Kale, Al-Rfou, Siddhant, Barua, and Raffel}]{xue2021mt5}
Linting Xue, Noah Constant, Adam Roberts, Mihir Kale, Rami Al-Rfou, Aditya Siddhant, Aditya Barua, and Colin Raffel. 2021.
\newblock \href {http://arxiv.org/abs/2010.11934} {mt5: A massively multilingual pre-trained text-to-text transformer}.

\bibitem[{Yuan et~al.(2021)Yuan, Neubig, and Liu}]{NEURIPS2021_e4d2b6e6}
Weizhe Yuan, Graham Neubig, and Pengfei Liu. 2021.
\newblock \href {https://proceedings.neurips.cc/paper/2021/file/e4d2b6e6fdeca3e60e0f1a62fee3d9dd-Paper.pdf} {Bartscore: Evaluating generated text as text generation}.
\newblock In \emph{Advances in Neural Information Processing Systems}, volume~34, pages 27263--27277. Curran Associates, Inc.

\bibitem[{Zhang et~al.(2020{\natexlab{a}})Zhang, Zhao, Saleh, and Liu}]{zhang2020pegasus}
Jingqing Zhang, Yao Zhao, Mohammad Saleh, and Peter~J. Liu. 2020{\natexlab{a}}.
\newblock \href {http://arxiv.org/abs/1912.08777} {Pegasus: Pre-training with extracted gap-sentences for abstractive summarization}.

\bibitem[{Zhang et~al.(2020{\natexlab{b}})Zhang, Kishore, Wu, Weinberger, and Artzi}]{zhang2020bertscore}
Tianyi Zhang, Varsha Kishore, Felix Wu, Kilian~Q. Weinberger, and Yoav Artzi. 2020{\natexlab{b}}.
\newblock \href {http://arxiv.org/abs/1904.09675} {Bertscore: Evaluating text generation with bert}.

\end{thebibliography}

    \appendix

    \onecolumn

    \section{Proof of the Upper and  Lower Bounds}
    \label{sec:full_proof}

\subsection{Proof of the upper bound on the expected error rate}
   We begin by noticing that 
		\begin{align}
1 - \underset{c\in\mathcal{C}}{\sum}p^2(c|\mathbf{s})
& = 1 - \mathbb{E} \left[ p(C|\mathbf{S})  | \mathbf{S} =  \mathbf{s} \right]\nonumber\\
&\geq 1 - \mathbb{E} \left[ \max_{c\in\mathcal{C}} p(c|\mathbf{s}) \big | \mathbf{S} = \mathbf{s} \right]\nonumber\\
& = 1 - \max_{c\in\mathcal{C}} p(c|\mathbf{s}). 
\end{align}
By taking the expectation over $\mathbf{S}$ at both sides and using the well-known relationship with the Bayes error,  we obtain the following  inequality: 
\begin{align}
 P_e (c,\widehat{c},\theta) & =  1 - \mathbb{E}  \left[\max_{c\in\mathcal{C}} p(c|\mathbf{S})  \right]\nonumber\\ 
 &\leq 1-  \mathbb{E}  \left[  \underset{c\in\mathcal{C}}{\sum}p^2(c|\mathbf{S}) \right]. \label{eq-mis1}
\end{align}
Similarly, it is possible to derive a lower bound: 
	\begin{align}
\underset{c\in\mathcal{C}}{\sum}p^2(c|\mathbf{s}) 
& =  p^2(c^\star|\mathbf{s})  + \underset{c\neq c^\star }{\sum} p^2(c|\mathbf{s}) \geq  \left( \max_{c\in\mathcal{C}} p(c|\mathbf{s})\right)^2,  \label{eq-mis2}
\end{align}
where $c^\star(\mathbf{s}) =\arg\max_{c\in\mathcal{C}} p(c|\mathbf{s}) $. By taking the expectation over $\mathbf{S}$ at both sides, we obtain: 
	\begin{align}
\sqrt{ \mathbb{E}\left[ \underset{c\in\mathcal{C}}{\sum}p^2(c|\mathbf{S}) \right]  } 
& \geq  
\mathbb{E}\left[\sqrt{ \underset{c\in\mathcal{C}}{\sum}p^2(c|\mathbf{S}) } \right] \nonumber\\ 
& \geq \mathbb{E}\left[  \max_{c\in\mathcal{C}} p(c|\mathbf{S})   \right]  =  1-   P_e (c,\widehat{c},\theta).  \label{eq-mis3}
 \end{align}
Let us denote the second order R\'enyi's entropy \cite{Rnyi1961OnMO} conditioned on $\mathbf{s}$ as follow: 
\begin{equation}
H_2(C|\mathbf{s}) \triangleq -\frac{1}{2} \log \left( \sum\limits_{c\in\mathcal{C}} p^2 (c|\mathbf{s}) \right),  
\end{equation}
and thus, 
\begin{equation}
\sum\limits_{c\in\mathcal{C}} p^2 (c|\mathbf{s}) = \exp\left(  -2 H_2(C|\mathbf{s}) \right). \label{eq-renyi}  
\end{equation}
By replacing \eqref{eq-renyi} in \eqref{eq-mis1} and in \eqref{eq-mis2}, we obtain 
\begin{align}
1 - \sqrt{ \displaystyle \mathbb{E}_{\mathbf{s} \sim  p_S } \left[ \exp\left(  -2 H_2(C|\mathbf{s}) \right)  \right] }  \leq  P_e (c,\widehat{c},\theta)  \leq 1 - \mathbb{E}_{\mathbf{s} \sim  p_S } \left[ \exp\left(  -2 H_2(C|\mathbf{s}) \right)  \right]. \label{eq-upper-bound}
\end{align}
Since $\log$ is a concave function:  
\begin{equation}
\log \left( \sum\limits_{c\in\mathcal{C}} p^2 (c|\mathbf{s})\right)  \geq   \sum\limits_{c\in\mathcal{C}} p(c|\mathbf{s})  \log p(c|\mathbf{s}),  
\end{equation}
we have that 
\begin{equation}
H_2(C|\mathbf{s}) \leq -\sum\limits_{c\in\mathcal{C}} p(c|\mathbf{s})  \log p(c|\mathbf{s}) \triangleq  H(C|\mathbf{s}), \label{eq-shannon-inequality}
\end{equation}
where $H(C|\mathbf{s})$ indicates the Shannon entropy conditioned to the given observation $\mathbf{s}$. Replacing \eqref{eq-shannon-inequality} in the upper bound of \eqref{eq-upper-bound} yields 
\begin{align}
 P_e (c,\widehat{c},\theta)  & \leq 1 - \mathbb{E}_{\mathbf{s} \sim \mathbf{S}} \left[ \exp\left(  - H(C|\mathbf{s})  \right)  \right] \nonumber\\
 & \leq 1 - \exp\left(  - H(C|\mathbf{S})  \right) ,  \label{eq-markov1}\\
  & \leq  1 - \exp\left(  - H(\mathbf{T} | \mathbf{S})  \right), \label{eq-markov2} \\
    & \equiv  1 -  \kappa  \exp\big(  I(\mathbf{T} ; \mathbf{S}) \big), \label{eq-markov3} 
\end{align}
where \eqref{eq-markov1} follows from the fact that the negative exponential function is convex ;   \eqref{eq-markov2} follows by Data-Processing Inequality  \cite{cover91} since $ C \triangleq  c(\mathbf{T})$ and  thus $ C   \leftrightarrow \mathbf{T} \leftrightarrow \mathbf{S}$ form a Markov Chain and $H(\mathbf{T} | \mathbf{S}) $ denotes the differential entropy of the text $\mathbf{T}$  given the summary $\mathbf{S}$ ; and \eqref{eq-markov3} follows by an appropriate definition of the constant $0< \kappa <1$ which does not depend on the summary random variable $\mathbf{S}$. This concludes the proof of the desired upper bound.

    \subsection{Review of the Distortion-Rate Function}
\label{sec:RD-function}

    The rate-distortion (RD) function of a random variable $C$ for a given distortion function $\ell(\cdot, \cdot)$ is defined as \cite[eq.~(1.4)]{csiszar74}
    \begin{align}
        R_{C,\ell}(D) \,\triangleq \inf_{\rule{0mm}{4.3mm}\substack{p(\widehat{c}|c):\\ \Esp[\ell(C,\widehat{C})] \leq D}} \!\! I(C;\widehat{C}).\label{eq-RD-def}
    \end{align}
    For convenience, we assume that
    \[ \inf_{\widehat{c}}\ell(c,\widehat{c})=0, \ \ \forall c. \]
    Furthermore, we suppose that there exists $D>0$ such that $R_{C,\ell}(D)$ is finite. We denote the infimum of those $D$ by $D_{\min}$ and $R_{\max}\triangleq R_{C,\ell}(D_{\min})$ (or, more precisely, $R_{\max}\triangleq \lim_{D\to D_{\min}+}R(D)$).

    The following properties (see \cite[Lem.~1.1]{csiszar74}) of the RD function will be used.
    \begin{thm}
        \label{thm:rdprop}
        The RD function $R_{C,\ell}(D)$ is a non-increasing convex function of $D$ on the interval $(D_{\min}, \infty)$.
        It is monotonically decreasing on the interval $(D_{\min},D_{\max})$ and constant with $R_{C,\ell}(D)=R_{\min}$ on $[D_{\max},\infty)$ (here $D_{\max}=\infty$ and $D_{\min}=0$ are possible). \textbf{The inverse function $\RD^{-1}(r)$ is well defined on $(R_{\min}, R_{\max})$ and is monotonically decreasing. It is known as the distortion rate (DR)
            function of the random variable $C$ for the given distortion function $\ell(\cdot, \cdot)$.}
    \end{thm}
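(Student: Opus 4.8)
The plan is to deduce every assertion from two elementary ingredients: the feasible set of test channels in \eqref{eq-RD-def} is nested in $D$, and the map $p(\widehat c\mid c)\mapsto I(C;\widehat C)$ is convex when the input law $p(c)$ is held fixed \cite{cover91}. Throughout I would work with $\varepsilon$-optimal channels rather than optimizers, since the infimum in \eqref{eq-RD-def} need not be attained.

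First I would establish monotonicity: for $D_1\le D_2$ the set $\{p(\widehat c\mid c):\Esp[\ell(C,\widehat C)]\le D_1\}$ is contained in $\{p(\widehat c\mid c):\Esp[\ell(C,\widehat C)]\le D_2\}$, so the infimum defining $\RD(D_2)$ is over a larger set, whence $\RD(D_2)\le\RD(D_1)$. Next, for convexity, fix $D_1,D_2>D_{\min}$, $\lambda\in[0,1]$, and $D_\lambda:=\lambda D_1+(1-\lambda)D_2$; for each $\varepsilon>0$ choose channels $q_1,q_2$ feasible at $D_1,D_2$ with $I_{q_j}(C;\widehat C)\le \RD(D_j)+\varepsilon$. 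The mixture $q_\lambda:=\lambda q_1+(1-\lambda)q_2$ has expected distortion $\lambda\Esp_{q_1}[\ell]+(1-\lambda)\Esp_{q_2}[\ell]\le D_\lambda$, so it is feasible at $D_\lambda$ and $\RD(D_\lambda)\le I_{q_\lambda}(C;\widehat C)\le \lambda I_{q_1}(C;\widehat C)+(1-\lambda)I_{q_2}(C;\widehat C)\le\lambda\RD(D_1)+(1-\lambda)\RD(D_2)+\varepsilon$ by convexity of mutual information in the channel; letting $\varepsilon\downarrow 0$ gives convexity on $(D_{\min},\infty)$, and continuity on that open interval then follows from finiteness together with convexity.

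For the shape and invertibility I would set $R_{\min}:=\lim_{D\to\infty}\RD(D)=\inf_D\RD(D)$, $R_{\max}:=\lim_{D\to D_{\min}+}\RD(D)$ (possibly $+\infty$), and $D_{\max}:=\inf\{D:\RD(D)=R_{\min}\}$ (possibly $\infty$). The key observation is that if $\RD(a)=\RD(b)$ for some $D_{\min}<a<b$, then applying the convexity inequality to the triple $a<b<x$ gives $\RD(x)\ge\RD(b)$ for all $x>b$, which with monotonicity forces $\RD$ constant on $[b,\infty)$. Taking $b=D_{\max}$ yields $\RD\equiv R_{\min}$ on $[D_{\max},\infty)$; and any flat stretch inside $(D_{\min},D_{\max})$ would, by the same observation, make $\RD$ constant on some $[b,\infty)$ with $b<D_{\max}$, contradicting the definition of $D_{\max}$, so $\RD$ is strictly decreasing on $(D_{\min},D_{\max})$. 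A strictly decreasing continuous function on $(D_{\min},D_{\max})$ is a bijection onto the open interval between its one-sided limits, i.e.\ onto $(R_{\min},R_{\max})$; hence $\RD^{-1}$ is well defined there and, being the inverse of a strictly decreasing continuous bijection, is itself strictly decreasing and continuous. The cases $D_{\min}=0$ and $D_{\max}=\infty$ need no change of argument, only reading the appropriate one-sided limit at the relevant endpoint.

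I expect the main obstacle to be the possible non-attainment of the infimum in \eqref{eq-RD-def}, which is precisely why the convexity step must be run with $\varepsilon$-optimal channels; the remaining work is careful bookkeeping over the degenerate configurations ($D_{\min}=0$, $D_{\max}=\infty$, $R_{\max}=\infty$, or $D_{\min}=D_{\max}$, in which last case the interval $(D_{\min},D_{\max})$ and the statement about $\RD^{-1}$ are vacuous).
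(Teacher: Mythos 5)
Your proposal is correct, but there is nothing in the paper to compare it against: the paper states Theorem~\ref{thm:rdprop} as a known result and simply cites Csisz\'ar's Lemma~1.1, offering no proof of its own. What you have written is the standard self-contained argument — monotonicity from nestedness of the feasible sets, convexity from convexity of $I(C;\widehat C)$ in the test channel for fixed $p(c)$ (correctly run with $\varepsilon$-optimal channels since the infimum need not be attained), continuity on the open interval from convexity plus finiteness, and the ``flat segments propagate to $+\infty$'' observation that forces strict decrease on $(D_{\min},D_{\max})$ and constancy on $[D_{\max},\infty)$, after which invertibility onto $(R_{\min},R_{\max})$ is immediate — and each step checks out. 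Two small points are worth making explicit if you write this up: finiteness of $\RD$ on all of $(D_{\min},\infty)$ follows from monotonicity together with the definition of $D_{\min}$ as the infimum of the distortions at which $\RD$ is finite (it is not automatic from finiteness at a single point alone); and when $D_{\max}<\infty$ you need continuity at $D_{\max}$, together with the fact that the level set $\{D:\RD(D)=R_{\min}\}$ is a right half-line by your propagation argument, to conclude $\RD(D_{\max})=R_{\min}$ and hence that the image of $(D_{\min},D_{\max})$ is exactly $(R_{\min},R_{\max})$. With those details spelled out, your proof matches the classical one in the cited reference in substance, and the handling of the degenerate cases ($D_{\min}=D_{\max}$, $D_{\max}=\infty$, $R_{\max}=\infty$) is appropriate.
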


    \subsection{Proof of the lower bound on the average  of a general  loss}
    \label{sec:bound_proof}
   For any suitably  loss or evaluation metric denoted by $\ell(c, \widehat{c})$, the quality of the predicted concept $\widehat{c}(\mathbf{S}) $, which is  based on the random  summary $\mathbf{S}$, compared to a desired target concept ${C}\triangleq {c}(\mathbf{T})$ from the original text $\mathbf{T}$,   can be expressed by the average loss $\Esp[\ell(c(\mathbf{T}),\widehat{c}(\mathbf{S}))]$ with respect to the joint distribution of the source text and its summary $(\mathbf{T},\mathbf{S})$. 
   
From Data-Processing Inequality and the definition of the RD function \eqref{eq-RD-def}, the following proposition provides a lower bound on the performance of any arbitrary  predictor $\widehat{c}(\mathbf{S})$ of the target concept $C$:

        \begin{align}
                     \label{eq:boundmi}
                I(\mathbf{T};\mathbf{S}) & \geq  I(c(\mathbf{T});\widehat{c}(\mathbf{S})) \\
                & \geq \inf_{\substack{p(\widehat{c}|c)\,:\, \label{eq:boundmi2}\\ 
                \Esp[\ell(C,\widehat{C})] \leq  \Esp[\ell(c(\mathbf{T}), \widehat{c}(\mathbf{S}))] }}  I(C;\widehat{C}) \\
                & = \RD\left( \Esp[\ell(c(\mathbf{T}), \widehat{c}(\mathbf{S}))] \right), \label{eq:boundmi3}
         \end{align}
 where the inequality in \eqref{eq:boundmi} follow from Data-Processing since $ c(\mathbf{X})   \leftrightarrow \mathbf{X} \leftrightarrow \mathbf{S} \leftrightarrow \widehat{c}(\mathbf{S})$ form a Markov Chain ; and  \eqref{eq:boundmi2} follows by the definition of the RD function \eqref{eq-RD-def}.   

\begin{itemize}
    \item         For $\Esp[\ell(c(\mathbf{T}), \widehat{c}(\mathbf{S}))] \in (D_{\min},D_{\max})$, we can invert the RD function \eqref{eq-RD-def}, and thus we obtain from \eqref{eq:boundmi} the fundamental bound $\RD^{-1}\left( I(\mathbf{T}; \mathbf{S})  \right)\leq \Esp[\ell(c(\mathbf{T}), \widehat{c}(\mathbf{S}))]$
        or, equivalently,
        \begin{align}
            \label{eq:bounded}
           \Esp[\ell(c(\mathbf{T}), \widehat{c}(\mathbf{S}))]  \,\ge \RD^{-1}\big(I(\mathbf{T};\mathbf{S}) \big),  
        \end{align}
which holds for any predictor $ \widehat{c}(\mathbf{S})$ and thus, for the one minimizing the left-hand size of \eqref{eq:bounded}. 

\item     For $\Esp[\ell(c(\mathbf{T}), \widehat{c}(\mathbf{S}))]  < D_{\min}$ equation~\eqref{eq:boundmi} reduces to $I(\textbf{T};\textbf{S}) \geq +\infty$ which shows that to achieve an expected distortion below $D_{\min}$ the random variables $(\mathbf{T},\mathbf{S})$ must have a joint distribution that is not absolutely continuous with respect to the product of their marginal distributions.

\item    For $\Esp[\ell(c(\mathbf{T}), \widehat{c}(\mathbf{S}))]  \geq D_{\max}$ we obtain the trivial bound $ I(\mathbf{T};\mathbf{S}) \geq 0$.
   \end{itemize}
    
    \begin{rem}
     Inequality~\eqref{eq:bounded} shows that for arbitrary random concept $c(\mathbf{T})$ about the text to be inferred with $ \widehat{c}(\mathbf{S})$ using the random summary $\mathbf{S}$ generated from $\mathbf{T}$, the expected loss  of any predictor  $\widehat{c}(\cdot)$ is lower bounded by a monotonically decreasing function of the mutual information between $\mathbf{T}$ and $\mathbf{S}$. \textbf{Thus, irrespective of the precise formulation of the loss function or the task defined by $c(\cdot)$ for execution on the summary, maximizing the mutual information $I(\mathbf{T};\mathbf{S})$ stands as a requisite condition for achieving commendable inference performance.
} Our result suggests that estimating mutual information of a given summarizer can be a good proxy to assess its quality in the sense of preserving relevant information about concepts.      
    \end{rem}
    
      \section{Rate-Distortion Bound for Classification Tasks}
    \label{sec:ap_rate_distortion}

    We assume that $C$ is uniformly distributed on the finite set $\mathcal{C}\triangleq \{1,2,\dots, m\}$, and we use the Hamming distortion function defined by
    \[
        \ell(c,\widehat{c})\triangleq \begin{cases}
                                                0, & \text{if }\, c=\widehat{c} \\
                                                1, & \text{else}\,.
        \end{cases}
    \]
    Note that the expected distortion equals the expected error rate, i.e.,
    \begin{equation}
        \label{eq:exp_HD}
      P_e (c,\widehat{c},\theta) \,\triangleq\, \inf_{\widehat{c}: \,\Omega^* \rightarrow  \, \mathcal{C}} \mathrm{Pr} \big(C \neq \widehat{c}(\textbf{S}) \big)   \,=\,   \inf_{\widehat{c}: \,\Omega^* \rightarrow  \, \mathcal{C}}\Esp[\ell\big(C,\widehat{c}(\textbf{S})\big)]  \,.
    \end{equation}
    The RD function is given by \cite[Problem~10.5]{cover91} (not solved there)
    \begin{equation}
        \label{eq:RD_fano}
        R(D) \,= \begin{cases}
                     \log m - H_b(D) - D\log(m-1), & \text{if }\, 0\leq D \leq 1-\frac{1}{m} \\[1mm]
                     0, & \text{if }\, 1-\frac{1}{m}< D\,.
        \end{cases}
    \end{equation}
    Here, $H_b(D)$ is the binary entropy function of $D$, i.e., $H_b(D) \triangleq  - D \log (D) - (1-D) \log (1-D)$.
    Inserting \eqref{eq:exp_HD} and \eqref{eq:RD_fano} into \eqref{eq:boundmi}, we obtain
    \[
        I(\mathbf{T};\mathbf{S}) \,\geq\, \log m - H_b\big( P_e (c,\widehat{c},\theta) \big) - P_e (c,\widehat{c},\theta) \log(m-1) \,,
    \]
    which is the well known Fano's inequality \cite[Th.~2.10.1]{cover91}. This can be put into the form of the general lower bound \eqref{eq:bounded}:
    \[
       P_e (c,\widehat{c},\theta) \,\ge\, \RDH^{-1}\big( I(\mathbf{T};\mathbf{S})  \big) \,.
    \]
    However, a closed-form expression of $\RDH^{-1}(I)$ is not available.
    The function is plotted for different $m$ in Figure~\ref{fig:fano}.
    \begin{figure}[thb]
        \centering
        \begin{tikzpicture}
            \begin{axis}
                [
                xmin=0,
                ymin=0,
                xlabel={$I$},
                ylabel={$\RDH^{-1}(I)$}
                ]
                \addplot[
                    red,
                    domain=0:0.5,
                    samples=50
                ]
                ({ln(2) + x*ln(x)+(1-x)*ln(1-x)-x*ln(2-1)},x);
                \addplot[
                    green,
                    samples=50,
                    domain=0:0.75
                ]
                ({ln(4) + x*ln(x)+(1-x)*ln(1-x)-x*ln(4-1)},x);
                \addplot[
                    blue,
                    samples=50,
                    domain=0:0.9
                ]
                ({ln(10) + x*ln(x)+(1-x)*ln(1-x)-x*ln(10-1)},x);
                \legend{$m=2$,$m=4$ , $m=10$}
            \end{axis}
        \end{tikzpicture}
        \caption{$R(I)$ for different values of $m$.}
        \label{fig:fano}
    \end{figure}
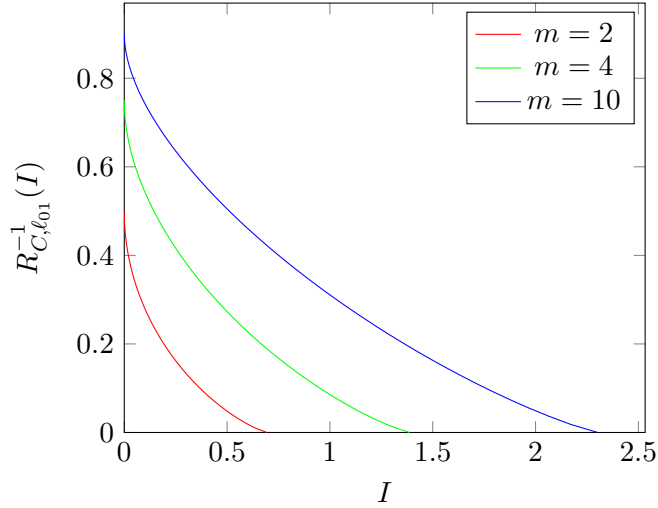

    \section{Model Specifications}
    \label{sec:models_app}

    All the evaluated models are listed with their characteristics in \autoref{tab:model_summaries}.
    
    \begin{table*}[h!]\centering
\caption{Summary of the models we benchmarked with their name on the Huggingface hub, size and performance metrics.}
\label{tab:model_summaries}
\resizebox{\textwidth}{!}{\begin{tabular}{lllrrrrrrrrrrrr}
\toprule
 &  & Size & \texttt{ROUGE-L} & \texttt{BERTScore} & \texttt{BARTScore} & M. I. & Attr. & Rep. & Compr. & Conc. & Gram. & $I(T,S)$ & H(T|S) & H(S|T) \\
Model & Dataset &  &  &  &  &  &  &  &  &  &  &  &  &  \\
\midrule
\multirow[c]{3}{*}{Falconsai/medical\_summarization} & cnn\_dailymail & 60 M & 0.17 & 0.16 & -1.70 & 0.33 & 0.79 & 0.46 & 0.76 & 0.42 & 0.30 & 54.09 & -14.60 & -7.24 \\
 & multi\_news & 60 M & 0.09 & 0.02 & -2.06 & 0.22 & 0.74 & 0.55 & 0.70 & 0.36 & 0.34 & 45.53 & -11.49 & -2.29 \\
 & xsum & 60 M & 0.31 & 0.24 & -1.69 & 0.33 & 0.77 & 0.36 & 0.74 & 0.33 & 0.34 & 49.30 & -15.25 & -6.26 \\
\cline{1-15}
\multirow[c]{3}{*}{Falconsai/text\_summarization} & cnn\_dailymail & 60 M & 0.15 & 0.17 & -1.48 & 0.36 & 0.82 & 0.64 & 0.81 & 0.44 & 0.38 & 53.97 & -14.49 & -8.22 \\
 & multi\_news & 60 M & 0.10 & 0.06 & -1.67 & 0.23 & 0.80 & 0.60 & 0.75 & 0.38 & 0.38 & 46.50 & -12.46 & -4.00 \\
 & xsum & 60 M & 0.31 & 0.29 & -1.53 & 0.33 & 0.81 & 0.58 & 0.77 & 0.35 & 0.37 & 50.71 & -16.67 & -8.84 \\
\cline{1-15}
\multirow[c]{3}{*}{sshleifer/distilbart-xsum-12-1} & xsum & 221 M & 0.08 & 0.03 & -3.27 & 0.29 & 0.35 & 0.86 & 0.90 & 0.22 & 0.80 & 46.52 & -12.49 & 17.01 \\
 & multi\_news & 221 M & 0.03 & -0.09 & -3.08 & 0.17 & 0.42 & 0.82 & 0.77 & 0.23 & 0.68 & 44.13 & -10.10 & 16.22 \\
 & cnn\_dailymail & 221 M & 0.04 & -0.04 & -3.16 & 0.21 & 0.41 & 0.87 & 0.83 & 0.22 & 0.67 & 49.70 & -10.20 & 15.21 \\
\cline{1-15}
\multirow[c]{3}{*}{sshleifer/distilbart-cnn-6-6} & cnn\_dailymail & 229 M & 0.16 & 0.22 & -1.35 & 0.47 & 0.85 & 0.92 & 0.85 & 0.55 & 0.50 & 55.31 & -15.80 & -11.84 \\
 & multi\_news & 229 M & 0.10 & 0.15 & -1.37 & 0.34 & 0.86 & 0.90 & 0.84 & 0.52 & 0.57 & 49.34 & -15.31 & -8.99 \\
 & xsum & 229 M & 0.32 & 0.35 & -1.30 & 0.47 & 0.84 & 0.91 & 0.90 & 0.49 & 0.60 & 52.81 & -18.77 & -13.27 \\
\cline{1-15}
\multirow[c]{3}{*}{sshleifer/distilbart-xsum-6-6} & cnn\_dailymail & 229 M & 0.04 & -0.02 & -2.88 & 0.38 & 0.55 & 0.96 & 0.96 & 0.41 & 0.87 & 49.19 & -9.68 & 12.37 \\
 & multi\_news & 229 M & 0.03 & -0.05 & -2.75 & 0.33 & 0.57 & 0.95 & 0.94 & 0.44 & 0.88 & 44.26 & -10.23 & 10.02 \\
 & xsum & 229 M & 0.09 & 0.06 & -2.95 & 0.46 & 0.46 & 0.97 & 0.97 & 0.38 & 0.91 & 47.07 & -13.02 & 13.67 \\
\cline{1-15}
\multirow[c]{3}{*}{sshleifer/distilbart-cnn-12-3} & xsum & 255 M & 0.31 & 0.32 & -1.58 & 0.49 & 0.74 & 0.82 & 0.94 & 0.47 & 0.64 & 51.50 & -17.45 & -11.65 \\
 & cnn\_dailymail & 255 M & 0.18 & 0.24 & -1.36 & 0.48 & 0.82 & 0.87 & 0.93 & 0.54 & 0.60 & 55.21 & -15.71 & -13.15 \\
 & multi\_news & 255 M & 0.10 & 0.13 & -1.49 & 0.36 & 0.82 & 0.86 & 0.92 & 0.53 & 0.66 & 48.32 & -14.28 & -9.03 \\
\cline{1-15}
\multirow[c]{3}{*}{sshleifer/distilbart-cnn-12-6} & cnn\_dailymail & 305 M & 0.18 & 0.25 & -1.31 & 0.52 & 0.84 & 0.91 & 0.93 & 0.58 & 0.62 & 55.51 & -16.00 & -13.79 \\
 & multi\_news & 305 M & 0.11 & 0.15 & -1.40 & 0.39 & 0.84 & 0.89 & 0.93 & 0.57 & 0.69 & 48.54 & -14.51 & -9.60 \\
 & xsum & 305 M & 0.31 & 0.33 & -1.50 & 0.54 & 0.76 & 0.91 & 0.96 & 0.51 & 0.71 & 51.65 & -17.58 & -12.11 \\
\cline{1-15}
\multirow[c]{3}{*}{sshleifer/distill-pegasus-xsum-16-4} & xsum & 369 M & 0.08 & 0.05 & -2.88 & 0.44 & 0.45 & 0.97 & 0.97 & 0.36 & 0.91 & 46.65 & -12.57 & 16.31 \\
 & multi\_news & 369 M & 0.03 & -0.06 & -2.60 & 0.28 & 0.56 & 0.95 & 0.93 & 0.39 & 0.86 & 43.70 & -9.68 & 16.49 \\
 & cnn\_dailymail & 369 M & 0.04 & -0.04 & -2.92 & 0.27 & 0.50 & 0.96 & 0.95 & 0.32 & 0.85 & 46.35 & -6.84 & 13.27 \\
\cline{1-15}
\multirow[c]{3}{*}{sshleifer/distill-pegasus-cnn-16-4} & xsum & 369 M & 0.23 & 0.28 & -1.49 & 0.47 & 0.81 & 0.84 & 0.94 & 0.49 & 0.70 & 51.32 & -17.29 & -6.70 \\
 & multi\_news & 369 M & 0.08 & 0.11 & -1.44 & 0.33 & 0.84 & 0.82 & 0.90 & 0.51 & 0.69 & 48.03 & -13.99 & -4.63 \\
 & cnn\_dailymail & 369 M & 0.13 & 0.19 & -1.40 & 0.48 & 0.83 & 0.81 & 0.92 & 0.54 & 0.64 & 54.69 & -15.19 & -8.70 \\
\cline{1-15}
\multirow[c]{3}{*}{facebook/bart-large-cnn} & cnn\_dailymail & 406 M & 0.18 & 0.25 & -1.19 & 0.49 & 0.86 & 0.95 & 0.94 & 0.59 & 0.67 & 55.54 & -16.05 & -13.55 \\
 & multi\_news & 406 M & 0.10 & 0.15 & -1.30 & 0.37 & 0.86 & 0.95 & 0.94 & 0.58 & 0.73 & 48.71 & -14.69 & -9.45 \\
 & xsum & 406 M & 0.32 & 0.34 & -1.31 & 0.52 & 0.81 & 0.96 & 0.97 & 0.53 & 0.74 & 52.36 & -18.33 & -13.06 \\
\cline{1-15}
\multirow[c]{2}{*}{google/pegasus-multi\_news} & multi\_news & 570 M & 0.06 & 0.02 & -2.51 & 0.48 & 0.69 & 0.93 & 0.79 & 0.52 & 0.58 & 46.71 & -12.68 & -4.24 \\
 & xsum & 570 M & 0.27 & 0.20 & -2.58 & 0.53 & 0.49 & 0.94 & 0.85 & 0.39 & 0.72 & 49.01 & -15.00 & -10.98 \\
\cline{1-15}
google/pegasus-arxiv & xsum & 570 M & 0.14 & -0.22 & -3.52 & 0.11 & 0.32 & 0.25 & 0.43 & 0.13 & 0.28 & 43.38 & -9.33 & 0.79 \\
\cline{1-15}
\multirow[c]{3}{*}{google/pegasus-large} & cnn\_dailymail & 570 M & 0.20 & 0.25 & -1.21 & 0.26 & 0.87 & 0.66 & 0.84 & 0.43 & 0.51 & 53.73 & -14.21 & -9.28 \\
 & multi\_news & 570 M & 0.07 & 0.12 & -1.52 & 0.18 & 0.82 & 0.82 & 0.72 & 0.37 & 0.42 & 47.86 & -13.81 & -5.25 \\
 & xsum & 570 M & 0.27 & 0.31 & -1.19 & 0.24 & 0.88 & 0.86 & 0.89 & 0.37 & 0.70 & 47.92 & -13.89 & -4.18 \\
\cline{1-15}
google/pegasus-multi\_news & cnn\_dailymail & 570 M & 0.16 & 0.16 & -2.26 & 0.47 & 0.63 & 0.94 & 0.83 & 0.43 & 0.65 & 53.89 & -14.39 & -13.12 \\
\cline{1-15}
\multirow[c]{2}{*}{google/pegasus-arxiv} & cnn\_dailymail & 570 M & 0.10 & -0.26 & -3.30 & 0.08 & 0.35 & 0.26 & 0.45 & 0.12 & 0.30 & 44.37 & -4.86 & -0.28 \\
 & multi\_news & 570 M & 0.05 & -0.27 & -3.59 & 0.06 & 0.32 & 0.36 & 0.46 & 0.12 & 0.39 & 39.87 & -5.82 & 1.21 \\
\cline{1-15}
\bottomrule
\end{tabular}}
\end{table*}

    \section{Mutual Information Estimation with KNIFE}

    \subsection{Predictive mutual information}

    The estimation of mutual information is widely acknowledged to be challenging, and in practical scenarios, we often resort to approximating it with a proxy measure known as Arimoto information~\cite{arimoto1971information} or recently rediscovered as predictive mutual information~\cite{xu2020theory}.
    Instead of computing the mutual information in the general case, it is computed under computational constraints enforced by a class of predictive functions.

    \begin{defn}[Predictive conditional entropies]
    \label{def:predictive_conditional_entropies}
    Let $\mathbf{T}$ and $\mathbf{S}$ be two random variables respectively over $\Omega^*$ and a class of functions $\mathcal{F}$:
        \[
            \begin{aligned}
                & h_{\mathcal{F}}(\mathbf{T} \mid \mathbf{S})=\inf _{f \in \mathcal{F}} \mathbb{E}_{\mathbf{T}\mathbf{S}}[-\log f_{[\mathbf{S}]}(\mathbf{T})], \\
                & h_{\mathcal{F}}(\mathbf{T} \mid \varnothing)=\inf _{f \in \mathcal{F}} \mathbb{E}_{\mathbf{T}}[-\log f_{[\varnothing]}(\mathbf{T})].
            \end{aligned}
        \]
    \end{defn}
    For the sake of brevity, we denote $h_{\mathcal{F}}(\mathbf{T} \mid \varnothing)$ by $h_{\mathcal{F}}(\mathbf{T})$.

    \begin{defn}[Predictive $\mathcal{F}$-information]
    \label{def:predictive_mutual_information}
            \begin{align}
                 I_{\mathcal{F}}(\mathbf{S} \rightarrow \mathbf{T}) & \triangleq  h_{\mathcal{F}}(\mathbf{T})-h_{\mathcal{F}}(\mathbf{T} \mid \mathbf{S}), \\
                I_{\mathcal{F}}(\mathbf{T} \rightarrow \mathbf{S}) & \triangleq h_{\mathcal{F}}(\mathbf{S})-h_{\mathcal{F}}(\mathbf{S} \mid \mathbf{T}). 
            \end{align}
       If $\mathcal{F}$ represents the set of all possible functions, then we expect $I_{\mathcal{F}}(\mathbf{T} \rightarrow \mathbf{S}) = I_{\mathcal{F}}(\mathbf{S} \rightarrow \mathbf{T}) = I_{\mathcal{F}}(\mathbf{S}; \mathbf{T})$. However, due to computational limitations imposed by $\mathcal{F}$, these estimators are not symmetrical. Therefore, we have two options for estimating the mutual information.
    \end{defn}

    \begin{rem}
       The predictive mutual information is asymmetric with respect to $\mathbf{S}$ and $\mathbf{T}$. In our context, we opt for using the predictive mutual information $I_{\mathcal{F}}(\mathbf{S} \rightarrow \mathbf{T})$ to gauge the degree of information preservation about the source texts through the summarization process. Thus, we define $\widehat{I}(\mathbf{T}; \mathbf{S}) \equiv I_{\mathcal{F}}(\mathbf{S} \rightarrow \mathbf{T})$. Experimentally, we observed that the predictive mutual information $I_{\mathcal{F}}(\mathbf{T} \rightarrow \mathbf{S})$ did not yield consistent outcomes. Further details are provided in \autoref{sec:full_proof}. We leverage this asymmetry in \autoref{sec:summarizers_hierarchy}.
    \end{rem}
\vspace{1mm}

    \noindent\textbf{Mutual information estimator.} We utilize the KNIFE estimator~\cite{pichler2022differential} to estimate the predictive mutual information between continuous random variables. This estimator defines $\mathcal{F}$ as the class of Gaussian Mixtures with $K$ modes, introducing a soft-clustering approach for text generation evaluation~\cite{pillutla2021mauve, pimentel2023usefulness}. Through experimentation, we observed that varying the number of modes $K$ did not notably affect the results. Therefore, we present our findings with $K=4$.

    \label{sec:gaussian_vs_knife}

    \label{sec:ap_mi_estimation}
    Initially, we examine the applicability of the basic Gaussian estimator of mutual information proposed in~\cite{kim2022mutual}. However, we find it unsuitable for our scenario since the embeddings of both the source texts and summaries exhibit multimodal distributions, as illustrated in \autoref{fig:non_gaussian_datasets}. Instead, we find the KNIFE estimator~\cite{pichler2022differential} to be better suited for our context as it is designed to estimate mixtures of Gaussians.

    \begin{figure}
        \centering
        \includegraphics[width=1\textwidth]{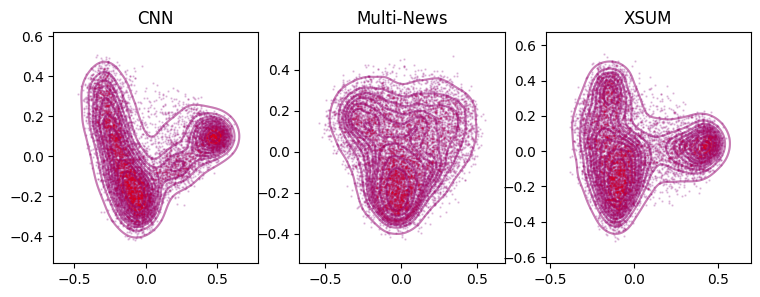}
        \caption{PCA was performed on the embeddings of the source texts and summaries for three datasets under consideration. It is evident from the plots that the embeddings do not exhibit a Gaussian distribution but rather resemble a mixture of Gaussians. This characteristic makes the Gaussian estimator of MI unsuitable for our purposes.}
        \label{fig:non_gaussian_datasets}
    \end{figure}

    \section{Comparison Across Datasets}
    \label{sec:seahorse_across}
    \label{sec:seahorse_app}

   One might seek to compare models evaluated on different datasets. However, the mutual information, in its current form, is not suitable for such comparisons as it relies on the entropy of the dataset. In \autoref{fig:correlation_with_human_judgement_htext}, we demonstrate that for the SEAHORSE benchmark, there are significant variations in the entropies of the source texts, introducing bias into the comparison of mutual informations estimated as $I(\mathbf{T} ; \mathbf{S}) = H(\mathbf{T}) - H(\mathbf{T}| \mathbf{S})$. To address this issue, we propose normalizing the mutual information by the entropy of the dataset. We show that the normalized version of the mutual information correlates with the responses to the questions of the SEAHORSE benchmark (cf. \autoref{fig:correlation_with_human_judgment}).\vspace{1mm}

    \begin{rem}
        While the SEAHORSE benchmarks contains similar texts for all their models, the models have been each evaluated on different samples. For instant, in the english subset, only $91$ samples out of the $10 000$ are common to all models. This leads to biaises in the evaluation of the mutual information.
    \end{rem}

   We observed that variations in the source text datasets significantly affect the estimation results of mutual information. There's a tendency for smaller models to exhibit higher source text entropies, leading to misleading comparisons. To address this, we suggest computing the normalized MI between the source texts and summaries. This normalized MI is defined as follows:
    \begin{align}
        \label{eq:norm_mi}
       \textrm{Normalized MI} \triangleq  \frac{ I(\mathbf{T};\mathbf{S}) }{H(\mathbf{T})} = 1 - \frac{ H(\mathbf{T}|\mathbf{S}) }{H(\mathbf{T})},
    \end{align}
    where $H(\mathbf{T}|\mathbf{S}) \leq  H(\mathbf{T}) $.

    In \autoref{fig:correlation_with_human_judgment}, we observed weak correlations between this normalized MI and the human judgments reported in the SEAHORSE benchmark. This discrepancy might arise from the evaluation of different datasets for each model, suggesting that the normalized MI might not be the most suitable normalization method. Hence, comparing models evaluated on different datasets should be avoided for now. However, when evaluated on the same datasets, MI correlates well with the metrics trained on the SEAHORSE benchmark. This indicates that MI is an promising tool for evaluating summarizers.

    \begin{figure}
        \centering
        \includegraphics[width=1\textwidth]{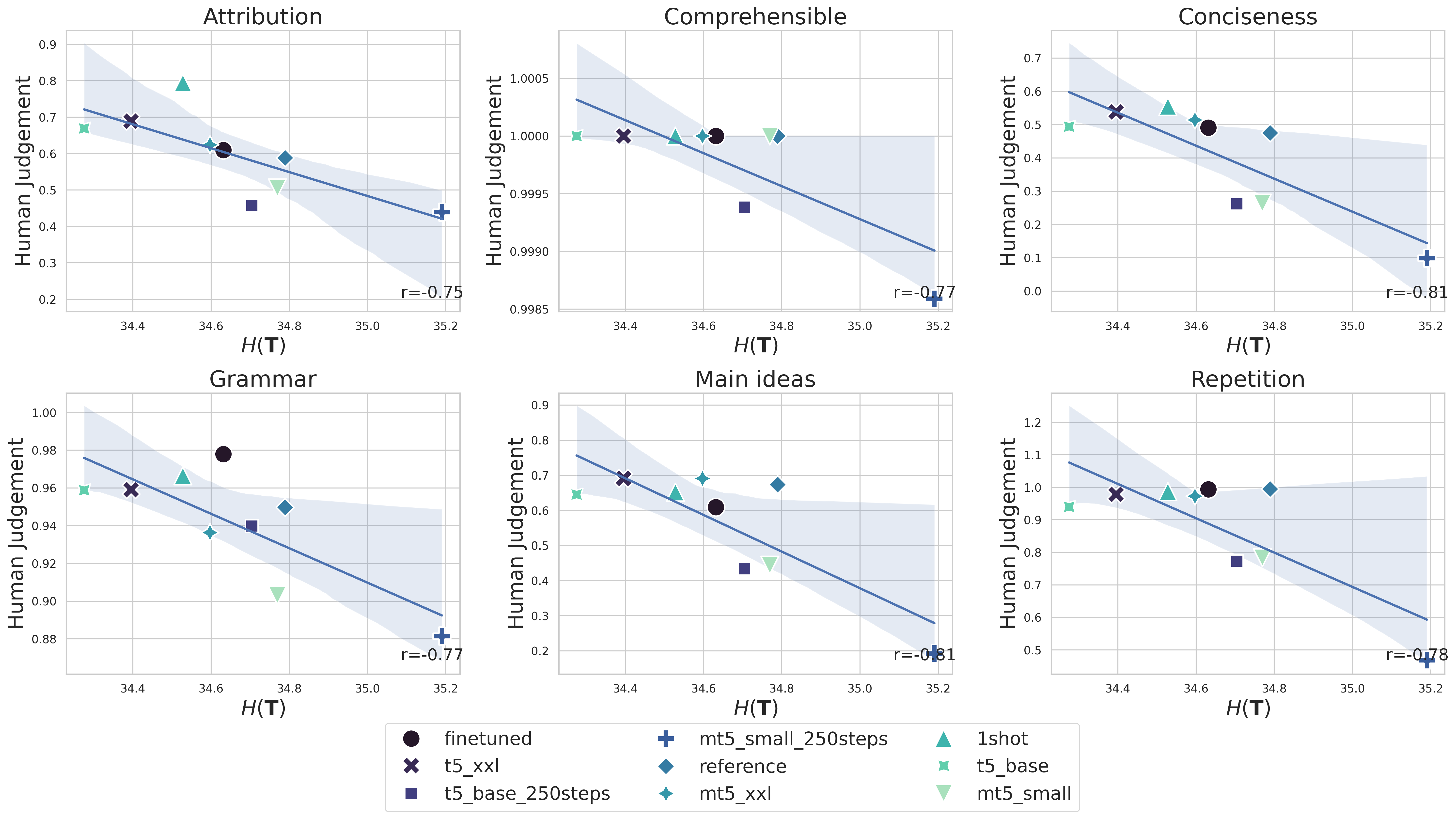}
        \caption{Correlation between the source texts entropies as estimated per KNIFE and the answers to the SEAHORSE benchmark. We observe that the entropy of the source texts correlates negatively with the answers.}
        \label{fig:correlation_with_human_judgement_htext}
    \end{figure}

    \begin{figure}
        \centering
        \includegraphics[width=1\textwidth]{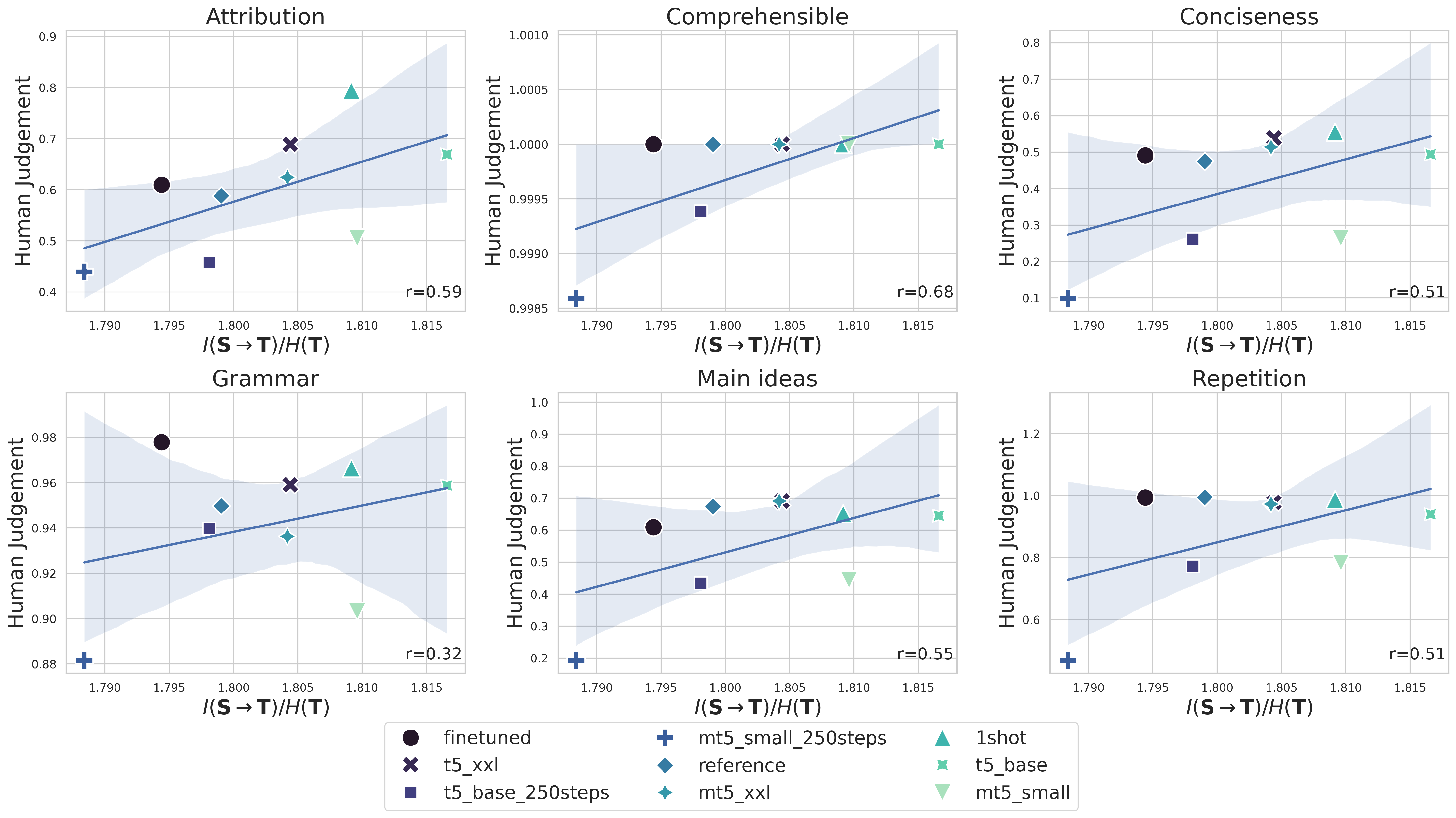}
        \caption{Correlation between the normalized mutual information and the answers to the SEAHORSE benchmark.}
        \label{fig:correlation_with_human_judgment}
    \end{figure}

    \subsection{SummEval dataset~\cite{fabbri2021summeval}}

   The SummEval dataset is well-suited for our task due to its inclusion of both summaries and corresponding source texts for identical documents. However, its limited size, comprising only 1700 samples, renders it insufficient for estimating mutual information.

   \subsection{Additional languages}
   \label{sec:additional_languages}

   We performed additional experiments in French, German and Spanish using multlingual embedder to evaluate the mutual informatio. We obtained mixed-results. While the overall trends are similar, the lack of good multilingual embedders certainly hinders the results we can hope to obtain. It is a clear limit of our work since our method is highly dependent on the existence of a viable embedder for the text distribution at hand.

  \begin{table}\centering
\caption{Correlation between MI and ROUGE, and Seahorse metrics and probability of success of the classifcation task, grouped by datasets for non-trivial decoding strategies. SH. stands for Seahorse metrics and CT. for classification tasks.}
\label{tab:correlation_table}
\resizebox{0.5\textwidth}{!}{\begin{tabular}{llrrrrr}
\toprule
 & Metric & $I(\textbf{S};\textbf{T})$ & Attribution & BERTScore & Main ideas & ROUGE-L \\
\midrule
\multirow[c]{6}{*}{SH.} & Attribution & {\cellcolor[HTML]{3D4E8A}} \color[HTML]{F1F1F1} 0.39 & {\cellcolor[HTML]{FDE725}} \color[HTML]{000000} 1.00 & {\cellcolor[HTML]{4EC36B}} \color[HTML]{000000} 0.78 & {\cellcolor[HTML]{6CCD5A}} \color[HTML]{000000} 0.82 & {\cellcolor[HTML]{3F4889}} \color[HTML]{F1F1F1} 0.38 \\
 & Comprehensible & {\cellcolor[HTML]{440154}} \color[HTML]{F1F1F1} 0.10 & {\cellcolor[HTML]{3F4889}} \color[HTML]{F1F1F1} 0.38 & {\cellcolor[HTML]{2E6E8E}} \color[HTML]{F1F1F1} 0.49 & {\cellcolor[HTML]{306A8E}} \color[HTML]{F1F1F1} 0.47 & {\cellcolor[HTML]{440154}} \color[HTML]{F1F1F1} 0.08 \\
 & Conciseness & {\cellcolor[HTML]{2C718E}} \color[HTML]{F1F1F1} 0.50 & {\cellcolor[HTML]{60CA60}} \color[HTML]{000000} 0.81 & {\cellcolor[HTML]{75D054}} \color[HTML]{000000} 0.83 & {\cellcolor[HTML]{ADDC30}} \color[HTML]{000000} 0.90 & {\cellcolor[HTML]{32658E}} \color[HTML]{F1F1F1} 0.46 \\
 & Grammar & {\cellcolor[HTML]{440154}} \color[HTML]{F1F1F1} 0.14 & {\cellcolor[HTML]{482374}} \color[HTML]{F1F1F1} 0.28 & {\cellcolor[HTML]{3D4E8A}} \color[HTML]{F1F1F1} 0.39 & {\cellcolor[HTML]{365C8D}} \color[HTML]{F1F1F1} 0.43 & {\cellcolor[HTML]{440154}} \color[HTML]{F1F1F1} 0.04 \\
 & Main ideas & {\cellcolor[HTML]{2A768E}} \color[HTML]{F1F1F1} 0.51 & {\cellcolor[HTML]{6CCD5A}} \color[HTML]{000000} 0.82 & {\cellcolor[HTML]{ADDC30}} \color[HTML]{000000} 0.90 & {\cellcolor[HTML]{FDE725}} \color[HTML]{000000} 1.00 & {\cellcolor[HTML]{2C718E}} \color[HTML]{F1F1F1} 0.50 \\
 & Repetition & {\cellcolor[HTML]{440154}} \color[HTML]{F1F1F1} -0.31 & {\cellcolor[HTML]{440154}} \color[HTML]{F1F1F1} -0.17 & {\cellcolor[HTML]{440154}} \color[HTML]{F1F1F1} -0.14 & {\cellcolor[HTML]{440154}} \color[HTML]{F1F1F1} -0.12 & {\cellcolor[HTML]{440154}} \color[HTML]{F1F1F1} -0.57 \\
\cline{1-7}
\multirow[c]{5}{*}{CT.} & Topic & {\cellcolor[HTML]{443983}} \color[HTML]{F1F1F1} 0.33 & {\cellcolor[HTML]{440154}} \color[HTML]{F1F1F1} 0.19 & {\cellcolor[HTML]{482374}} \color[HTML]{F1F1F1} 0.28 & {\cellcolor[HTML]{481D6F}} \color[HTML]{F1F1F1} 0.26 & {\cellcolor[HTML]{1FA088}} \color[HTML]{F1F1F1} 0.65 \\
 & Emotions & {\cellcolor[HTML]{440154}} \color[HTML]{F1F1F1} 0.10 & {\cellcolor[HTML]{440154}} \color[HTML]{F1F1F1} 0.06 & {\cellcolor[HTML]{440154}} \color[HTML]{F1F1F1} 0.09 & {\cellcolor[HTML]{440154}} \color[HTML]{F1F1F1} 0.08 & {\cellcolor[HTML]{3F4889}} \color[HTML]{F1F1F1} 0.37 \\
 & Sentiment Analysis & {\cellcolor[HTML]{440154}} \color[HTML]{F1F1F1} -0.04 & {\cellcolor[HTML]{440154}} \color[HTML]{F1F1F1} -0.09 & {\cellcolor[HTML]{440154}} \color[HTML]{F1F1F1} -0.10 & {\cellcolor[HTML]{440154}} \color[HTML]{F1F1F1} -0.15 & {\cellcolor[HTML]{440154}} \color[HTML]{F1F1F1} -0.00 \\
 & GPT Detector & {\cellcolor[HTML]{2A768E}} \color[HTML]{F1F1F1} 0.51 & {\cellcolor[HTML]{463480}} \color[HTML]{F1F1F1} 0.32 & {\cellcolor[HTML]{2E6E8E}} \color[HTML]{F1F1F1} 0.49 & {\cellcolor[HTML]{38588C}} \color[HTML]{F1F1F1} 0.42 & {\cellcolor[HTML]{31B57B}} \color[HTML]{F1F1F1} 0.72 \\
 & Policy & {\cellcolor[HTML]{26AD81}} \color[HTML]{F1F1F1} 0.69 & {\cellcolor[HTML]{1E9C89}} \color[HTML]{F1F1F1} 0.64 & {\cellcolor[HTML]{31B57B}} \color[HTML]{F1F1F1} 0.72 & {\cellcolor[HTML]{37B878}} \color[HTML]{F1F1F1} 0.74 & {\cellcolor[HTML]{24878E}} \color[HTML]{F1F1F1} 0.57 \\
\cline{1-7}
\multirow[c]{3}{*}{Emb.} & MPNET & {\cellcolor[HTML]{26AD81}} \color[HTML]{F1F1F1} 0.69 & {\cellcolor[HTML]{1E9C89}} \color[HTML]{F1F1F1} 0.64 & {\cellcolor[HTML]{3FBC73}} \color[HTML]{F1F1F1} 0.75 & {\cellcolor[HTML]{46C06F}} \color[HTML]{F1F1F1} 0.76 & {\cellcolor[HTML]{24878E}} \color[HTML]{F1F1F1} 0.57 \\
 & all-MiniLM & {\cellcolor[HTML]{1FA088}} \color[HTML]{F1F1F1} 0.65 & {\cellcolor[HTML]{1FA088}} \color[HTML]{F1F1F1} 0.65 & {\cellcolor[HTML]{37B878}} \color[HTML]{F1F1F1} 0.74 & {\cellcolor[HTML]{3FBC73}} \color[HTML]{F1F1F1} 0.75 & {\cellcolor[HTML]{297A8E}} \color[HTML]{F1F1F1} 0.53 \\
 & Paraphrase & {\cellcolor[HTML]{460B5E}} \color[HTML]{F1F1F1} 0.22 & {\cellcolor[HTML]{482374}} \color[HTML]{F1F1F1} 0.28 & {\cellcolor[HTML]{460B5E}} \color[HTML]{F1F1F1} 0.22 & {\cellcolor[HTML]{450457}} \color[HTML]{F1F1F1} 0.21 & {\cellcolor[HTML]{440154}} \color[HTML]{F1F1F1} -0.07 \\
\cline{1-7}
\multirow[c]{2}{*}{Common} & ROUGE-L & {\cellcolor[HTML]{277F8E}} \color[HTML]{F1F1F1} 0.54 & {\cellcolor[HTML]{3F4889}} \color[HTML]{F1F1F1} 0.38 & {\cellcolor[HTML]{2A768E}} \color[HTML]{F1F1F1} 0.51 & {\cellcolor[HTML]{2C718E}} \color[HTML]{F1F1F1} 0.50 & {\cellcolor[HTML]{FDE725}} \color[HTML]{000000} 1.00 \\
 & BERTScore & {\cellcolor[HTML]{2C718E}} \color[HTML]{F1F1F1} 0.50 & {\cellcolor[HTML]{4EC36B}} \color[HTML]{000000} 0.78 & {\cellcolor[HTML]{FDE725}} \color[HTML]{000000} 1.00 & {\cellcolor[HTML]{ADDC30}} \color[HTML]{000000} 0.90 & {\cellcolor[HTML]{2A768E}} \color[HTML]{F1F1F1} 0.51 \\
\cline{1-7}
\bottomrule
\end{tabular}}
\end{table}

  \subsection{Full results}
  \label{sec:table_radar}
  \begin{table}\centering
\caption{Correlation between MI and ROUGE, and Seahorse metrics and probability of success of the classifcation task, grouped by datasets for non-trivial decoding strategies. SH. stands for Seahorse metrics and CT. for classification tasks.}
\label{tab:correlation_table}
\resizebox{0.5\textwidth}{!}{\begin{tabular}{llrrrrrr}
\toprule
 & Metric & $I(\mathbf{S};\mathbf{T})$ & Attribution & Main ideas & \texttt{BARTScore} & \texttt{BERTScore} & \texttt{ROUGE-L} \\
\midrule
\multirow[c]{6}{*}{SH.} & Attribution & {\cellcolor[HTML]{26828E}} \color[HTML]{F1F1F1} 0.56 & {\cellcolor[HTML]{FDE725}} \color[HTML]{000000} 1.00 & {\cellcolor[HTML]{481B6D}} \color[HTML]{F1F1F1} 0.26 & {\cellcolor[HTML]{D2E21B}} \color[HTML]{000000} 0.95 & {\cellcolor[HTML]{3FBC73}} \color[HTML]{F1F1F1} 0.75 & {\cellcolor[HTML]{39568C}} \color[HTML]{F1F1F1} 0.42 \\
 & Comprehensible & {\cellcolor[HTML]{440154}} \color[HTML]{F1F1F1} 0.11 & {\cellcolor[HTML]{440154}} \color[HTML]{F1F1F1} 0.07 & {\cellcolor[HTML]{375A8C}} \color[HTML]{F1F1F1} 0.42 & {\cellcolor[HTML]{440154}} \color[HTML]{F1F1F1} 0.10 & {\cellcolor[HTML]{440154}} \color[HTML]{F1F1F1} 0.02 & {\cellcolor[HTML]{440154}} \color[HTML]{F1F1F1} -0.37 \\
 & Conciseness & {\cellcolor[HTML]{5EC962}} \color[HTML]{000000} 0.80 & {\cellcolor[HTML]{20A486}} \color[HTML]{F1F1F1} 0.67 & {\cellcolor[HTML]{65CB5E}} \color[HTML]{000000} 0.81 & {\cellcolor[HTML]{1FA188}} \color[HTML]{F1F1F1} 0.66 & {\cellcolor[HTML]{20A486}} \color[HTML]{F1F1F1} 0.67 & {\cellcolor[HTML]{481467}} \color[HTML]{F1F1F1} 0.24 \\
 & Grammar & {\cellcolor[HTML]{440154}} \color[HTML]{F1F1F1} -0.24 & {\cellcolor[HTML]{440154}} \color[HTML]{F1F1F1} -0.35 & {\cellcolor[HTML]{440154}} \color[HTML]{F1F1F1} 0.16 & {\cellcolor[HTML]{440154}} \color[HTML]{F1F1F1} -0.34 & {\cellcolor[HTML]{440154}} \color[HTML]{F1F1F1} -0.30 & {\cellcolor[HTML]{440154}} \color[HTML]{F1F1F1} -0.50 \\
 & Main ideas & {\cellcolor[HTML]{4AC16D}} \color[HTML]{000000} 0.77 & {\cellcolor[HTML]{481B6D}} \color[HTML]{F1F1F1} 0.26 & {\cellcolor[HTML]{FDE725}} \color[HTML]{000000} 1.00 & {\cellcolor[HTML]{471063}} \color[HTML]{F1F1F1} 0.23 & {\cellcolor[HTML]{33638D}} \color[HTML]{F1F1F1} 0.45 & {\cellcolor[HTML]{482475}} \color[HTML]{F1F1F1} 0.28 \\
 & Repetition & {\cellcolor[HTML]{440154}} \color[HTML]{F1F1F1} 0.01 & {\cellcolor[HTML]{440154}} \color[HTML]{F1F1F1} -0.23 & {\cellcolor[HTML]{3D4D8A}} \color[HTML]{F1F1F1} 0.39 & {\cellcolor[HTML]{440154}} \color[HTML]{F1F1F1} -0.23 & {\cellcolor[HTML]{440154}} \color[HTML]{F1F1F1} -0.08 & {\cellcolor[HTML]{440154}} \color[HTML]{F1F1F1} -0.34 \\
\cline{1-8}
\multirow[c]{4}{*}{CT.} & Sentiment analysis & {\cellcolor[HTML]{1FA188}} \color[HTML]{F1F1F1} 0.65 & {\cellcolor[HTML]{22A785}} \color[HTML]{F1F1F1} 0.68 & {\cellcolor[HTML]{433E85}} \color[HTML]{F1F1F1} 0.34 & {\cellcolor[HTML]{22A785}} \color[HTML]{F1F1F1} 0.68 & {\cellcolor[HTML]{27AD81}} \color[HTML]{F1F1F1} 0.70 & {\cellcolor[HTML]{277F8E}} \color[HTML]{F1F1F1} 0.54 \\
 & GPT detector & {\cellcolor[HTML]{32B67A}} \color[HTML]{F1F1F1} 0.73 & {\cellcolor[HTML]{75D054}} \color[HTML]{000000} 0.83 & {\cellcolor[HTML]{3C4F8A}} \color[HTML]{F1F1F1} 0.39 & {\cellcolor[HTML]{8ED645}} \color[HTML]{000000} 0.86 & {\cellcolor[HTML]{A2DA37}} \color[HTML]{000000} 0.89 & {\cellcolor[HTML]{1F9E89}} \color[HTML]{F1F1F1} 0.65 \\
 & Policy classification & {\cellcolor[HTML]{70CF57}} \color[HTML]{000000} 0.83 & {\cellcolor[HTML]{3C508B}} \color[HTML]{F1F1F1} 0.40 & {\cellcolor[HTML]{25AC82}} \color[HTML]{F1F1F1} 0.69 & {\cellcolor[HTML]{31668E}} \color[HTML]{F1F1F1} 0.46 & {\cellcolor[HTML]{48C16E}} \color[HTML]{F1F1F1} 0.77 & {\cellcolor[HTML]{2DB27D}} \color[HTML]{F1F1F1} 0.71 \\
 & Emotion classification & {\cellcolor[HTML]{2FB47C}} \color[HTML]{F1F1F1} 0.72 & {\cellcolor[HTML]{23A983}} \color[HTML]{F1F1F1} 0.68 & {\cellcolor[HTML]{3B518B}} \color[HTML]{F1F1F1} 0.40 & {\cellcolor[HTML]{2EB37C}} \color[HTML]{F1F1F1} 0.72 & {\cellcolor[HTML]{40BD72}} \color[HTML]{F1F1F1} 0.75 & {\cellcolor[HTML]{238A8D}} \color[HTML]{F1F1F1} 0.58 \\
\cline{1-8}
Emb. & Emb. Paraphrase & {\cellcolor[HTML]{58C765}} \color[HTML]{000000} 0.79 & {\cellcolor[HTML]{42BE71}} \color[HTML]{F1F1F1} 0.76 & {\cellcolor[HTML]{218F8D}} \color[HTML]{F1F1F1} 0.60 & {\cellcolor[HTML]{37B878}} \color[HTML]{F1F1F1} 0.74 & {\cellcolor[HTML]{26AD81}} \color[HTML]{F1F1F1} 0.69 & {\cellcolor[HTML]{472A7A}} \color[HTML]{F1F1F1} 0.29 \\
\cline{1-8}
\multirow[c]{3}{*}{Common} & \texttt{ROUGE-L} & {\cellcolor[HTML]{26828E}} \color[HTML]{F1F1F1} 0.55 & {\cellcolor[HTML]{39568C}} \color[HTML]{F1F1F1} 0.42 & {\cellcolor[HTML]{482475}} \color[HTML]{F1F1F1} 0.28 & {\cellcolor[HTML]{34618D}} \color[HTML]{F1F1F1} 0.45 & {\cellcolor[HTML]{26AD81}} \color[HTML]{F1F1F1} 0.70 & {\cellcolor[HTML]{FDE725}} \color[HTML]{000000} 1.00 \\
 & \texttt{BERTScore} & {\cellcolor[HTML]{58C765}} \color[HTML]{000000} 0.79 & {\cellcolor[HTML]{3FBC73}} \color[HTML]{F1F1F1} 0.75 & {\cellcolor[HTML]{33638D}} \color[HTML]{F1F1F1} 0.45 & {\cellcolor[HTML]{6CCD5A}} \color[HTML]{000000} 0.82 & {\cellcolor[HTML]{FDE725}} \color[HTML]{000000} 1.00 & {\cellcolor[HTML]{26AD81}} \color[HTML]{F1F1F1} 0.70 \\
 & \texttt{BARTScore} & {\cellcolor[HTML]{23888E}} \color[HTML]{F1F1F1} 0.57 & {\cellcolor[HTML]{D2E21B}} \color[HTML]{000000} 0.95 & {\cellcolor[HTML]{471063}} \color[HTML]{F1F1F1} 0.23 & {\cellcolor[HTML]{FDE725}} \color[HTML]{000000} 1.00 & {\cellcolor[HTML]{6CCD5A}} \color[HTML]{000000} 0.82 & {\cellcolor[HTML]{34618D}} \color[HTML]{F1F1F1} 0.45 \\
\cline{1-8}
\bottomrule
\end{tabular}}
\end{table}

    \section{Deciphering Summarizer Hierarchy}
    \label{sec:summarizers_hierarchy}
    
    We proposed to evaluate the mutual information $ I(\mathbf{T};\mathbf{S}) $, where $\mathbf{S} \follows p_\theta (\mathbf{s}|\mathbf{t})$ being a summarizer --in our case, a finetuned language model-- and $\mathbf{T}$ is the random variable of source texts. If we have two summarizers $p_\theta (\mathbf{s}|\mathbf{t})$ and $ q_\phi  (\mathbf{s}|\mathbf{t})$, we can evaluate the mutual information $I(\mathbf{S}_p \rightarrow \mathbf{S}_q ) $, where $\mathbf{S}_p \follows p_\theta (\mathbf{s}|\mathbf{t})$ and $\mathbf{S}_q \follows q_\phi (\mathbf{s}|\mathbf{t}) $. The mutual information here indicates how much information about $\mathbf{S}_q$ conveys about $\mathbf{S}_p$ and vice-versa. Interestingly, this observation enables us to build a hierarchy of summarizers. Some summarizers produce very informative summaries that can be used to predict the ones from other models while being so informative that other summaries cannot provide enough information to build them. We build the directed graph of the predictive power of the summaries of each model on the summaries of other models. A model's average outgoing mutual information is the average mutual information between this model's summaries and other models' summaries. A model's average incoming mutual information is the average mutual information between its summaries and those of other models.

    \noindent\textbf{OOD models.}    Underperforming models, which were trained on disparate data distributions such as Arxiv or medical summarization, generally display low mutual information with other models and prove challenging to predict from conventional specialized systems (see F\autoref{fig:model_hierarchy}). This outcome is unsurprising, given that these models exhibit significantly divergent behavior compared to others. Consequently, their outputs offer minimal insight into the outputs of other models.

    \noindent\textbf{Strong models.}     Robust models like distilBart and Bart demonstrate high informativeness regarding other models, while also posing challenges for prediction (refer to \autoref{fig:model_hierarchy} and \autoref{fig:model_comparison}). This outcome is anticipated, given that robust models can encapsulate significantly more information within their summaries compared to other models. As a result, their summaries prove valuable for predicting the outputs of weaker summarizers. However, these summaries are also considerably challenging to predict from the perspectives of those weaker models.

    \begin{figure}[H]
        \centering
        \includegraphics[width=.95\textwidth]{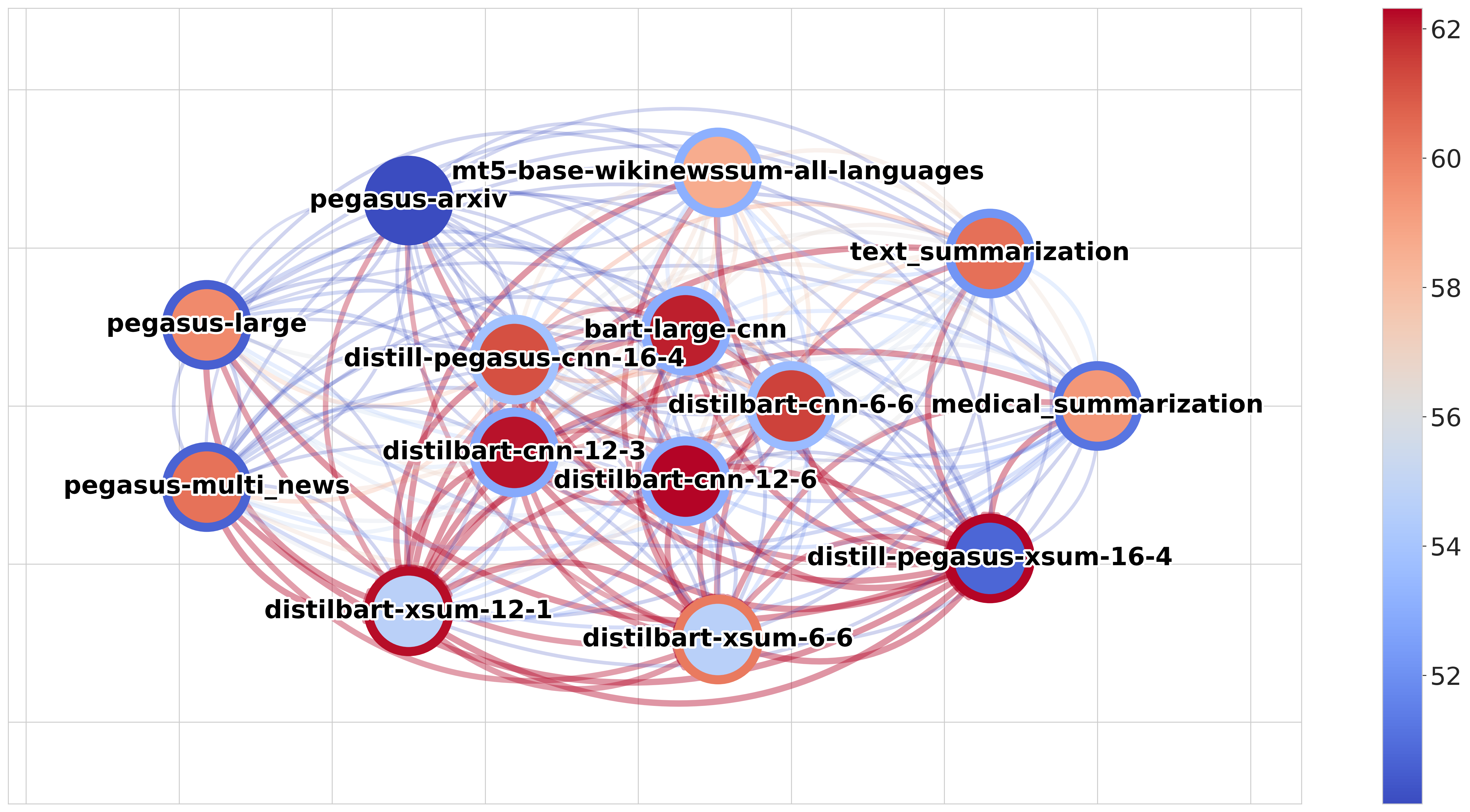}
        \caption{The predictive power of each model's summaries on the summaries of other models is depicted in the visualization. The central color denotes the average predictive power of that summarizer regarding the others, while the border color indicates the average predictive power of the other summarizers concerning that summarizer. A red center and blue border signify high informativeness, indicating a summarizer that is highly informative and difficult to predict. Conversely, a node with a blue center and red border implies low informativeness about the other summarizers but easy predictability by them.}
        \label{fig:model_hierarchy}
    \end{figure}

    \section{Ablations}

    \subsection{Decoding size}
    \label{sec:ap_decoding_size}
    \begin{figure}
        \centering
        \includegraphics[width=.95\textwidth]{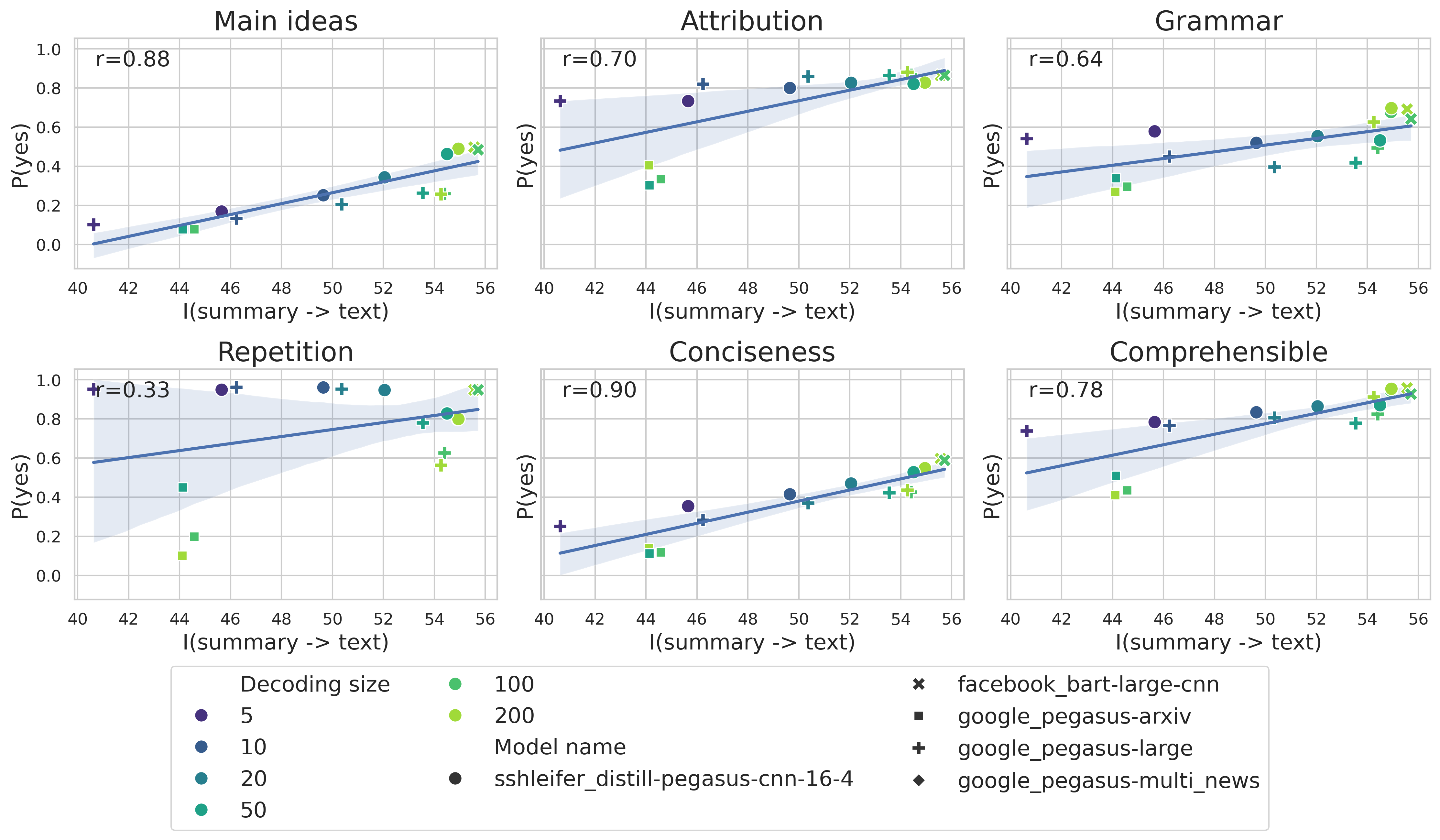}
        \caption{Correlation with the SEAHORSE metrics on the CNN DailyMail dataset is measured by $\Pr(\text{Yes})$, which represents the average probability across the dataset that the SEAHORSE model predicts the answer "Yes" to the corresponding question.}
        \label{fig:corr_sh_metrics_ablation}
    \end{figure}

    \noindent\textbf{Impact of the length of the summary.} The longer a summary, the more likely a downstream classifier is to produce the same output on the source text and on the summary. However, this trend is not always verified for weakers or OOD models. In \autoref{fig:classification_decoding_size} and \autoref{fig:corr_sh_metrics_ablation}, we can observe that the Pegasus model finetuned on arxiv papers tends to be less informative even when generating extended summaries when applied to the CNN-Dailymail dataset. This shows that the mutual information captures more than just the length of the summary but also its actual informativity.

    \begin{figure}
        \centering
        \includegraphics[width=.95\textwidth, trim=1cm 1cm 0cm 0]{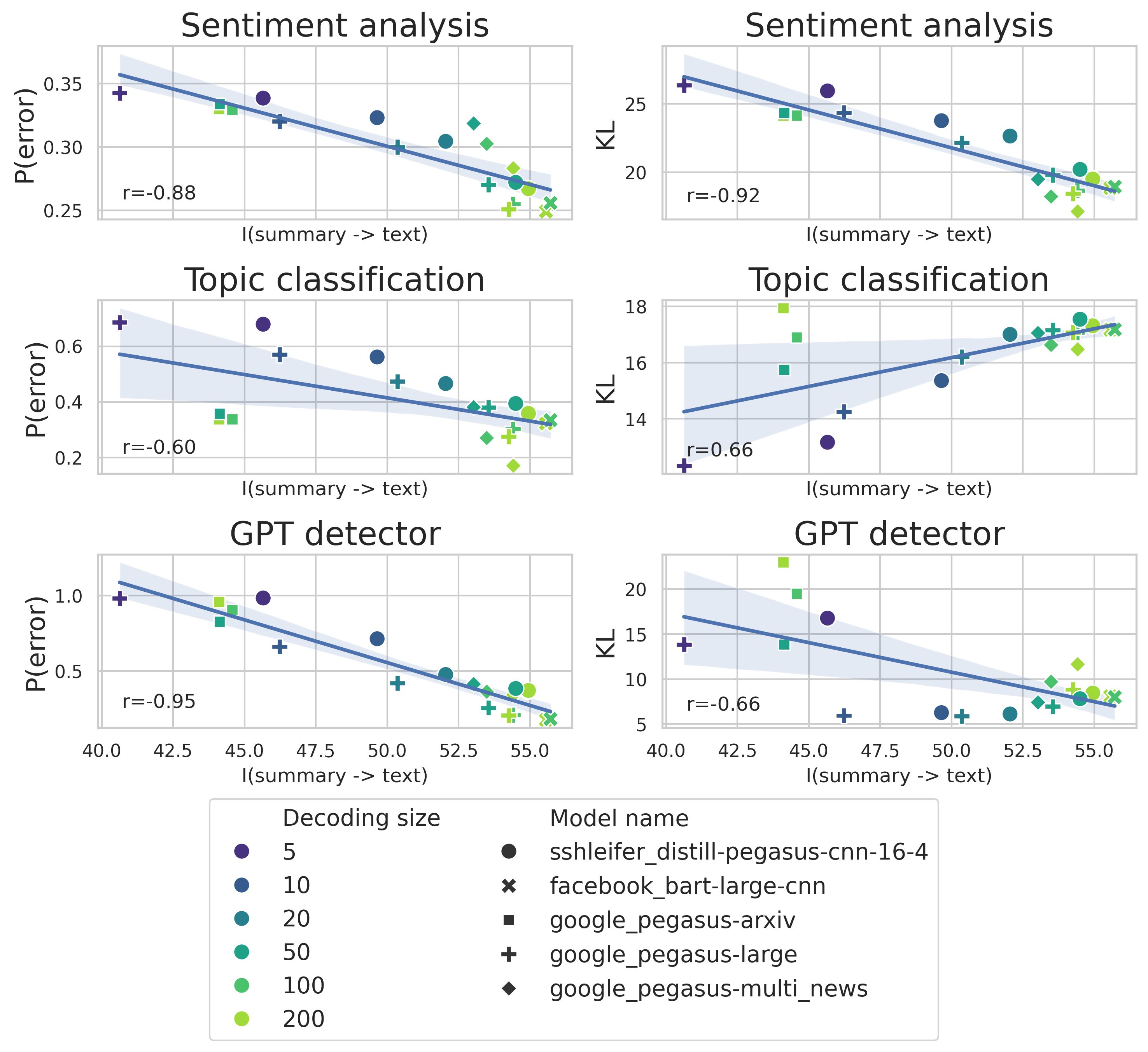}
        \vspace{2mm}
        \caption{As one would expect, the performance of the classification tasks on the summaries increases with the decoding size as it allows the model to pack more information into the summary.}
        \label{fig:classification_decoding_size}
    \end{figure}

    \section{Negative Results}

    \subsection{Pointwise mutual information}

    While the mutual information gives good insights about a summarizer, the point-wise mutual information, computed for each pair of source texts and summaries did not result in interesting correlations with the downstream tasks. Previous work have shown that it was a sound metric when the generative model is used to compute the mutual information~\cite{bugliarello2020its, ethayarajh2022understanding}, however in our scenario we fit an ad-hoc mutual information estimator.

\end{document}